\theoremstyle{plain}
\newtheorem{theorem}{Theorem}[section]
\newtheorem{lemma}[theorem]{Lemma}
\theoremstyle{definition}
\newtheorem{assumption}[theorem]{Assumption}
\theoremstyle{remark}
\newtheorem{remark}[theorem]{Remark}
\icmltitlerunning{Risk-Aware Dynamic Treatment Regimes}
\begin{document}

\twocolumn[
\icmltitle{SAFER: A Calibrated Risk-Aware Multimodal Recommendation Model for Dynamic Treatment Regimes}



\icmlsetsymbol{equal}{*}

\begin{icmlauthorlist}
\icmlauthor{Yishan Shen}{equal,upenn}
\icmlauthor{Yuyang Ye}{equal,rutgers}
\icmlauthor{Hui Xiong}{hkust}
\icmlauthor{Yong Chen}{upenn}
\end{icmlauthorlist}

\icmlaffiliation{upenn}{University of Pennsylvania}
\icmlaffiliation{rutgers}{Rutgers University}
\icmlaffiliation{hkust}{The Hong Kong University of Science and Technology (Guangzhou)}

\icmlcorrespondingauthor{Hui Xiong}{xionghui@ust.hk}
\icmlcorrespondingauthor{Yong Chen}{ychen123@pennmedicine.upenn.edu}

\icmlkeywords{Machine Learning, ICML}

\vskip 0.3in ]



\printAffiliationsAndNotice{\icmlEqualContribution} 

\begin{abstract}

Dynamic treatment regimes (DTRs) are critical to precision medicine, optimizing long-term outcomes through personalized, real-time decision-making in evolving clinical contexts, but require careful supervision for unsafe treatment risks. Existing efforts rely primarily on clinician-prescribed gold standards despite the absence of a known optimal strategy, and predominantly using structured EHR data without extracting valuable insights from clinical notes, limiting their reliability for treatment recommendations. In this work, we introduce SAFER, a calibrated risk-aware tabular-language recommendation framework for DTR that integrates both structured EHR and clinical notes, enabling them to learn from each other, and addresses inherent label uncertainty by assuming ambiguous optimal treatment solution for deceased patients. Moreover, SAFER employs conformal prediction to provide statistical guarantees, ensuring safe treatment recommendations while filtering out uncertain predictions. Experiments on two publicly available sepsis datasets demonstrate that SAFER outperforms state-of-the-art baselines across multiple recommendation metrics and counterfactual mortality rate, while offering robust formal assurances. These findings underscore SAFER’s potential as a trustworthy and theoretically grounded solution for high-stakes DTR applications.







\end{abstract}

\section{Introduction}


How can we enable models to recognize when they are uncertain about their predictions? Safely providing personalized, sequential treatment recommendations that adapt to a patient’s evolving clinical state is a longstanding challenge in optimizing outcomes in high-stakes healthcare scenarios. In this work, we address this challenge within the framework of dynamic treatment regimes (DTRs) \citep{robins1986new,murphy2003optimal,chakraborty2013statistical,tsiatis2019dynamic}. Crucial for real-world decision-making, resource allocation, and reducing trial-and-error treatments \citep{murphy2005generalization,laber2014dynamic}, DTR requires more precise, adaptive, and safe control over treatment strategies while minimizing risks in critical clinical contexts.

Recent advances in deep learning (DL) have significantly improved DTR frameworks by addressing key challenges such as patient heterogeneity, temporal dependencies, and the high-dimensional clinical data \citep{kosorok2019precision, moodie2007demystifying}. DL offers distinct advantages for DTR, including the ability to integrate heterogeneous data sources, such as electronic health records (EHRs) and temporal patterns, while capturing complex dependencies over time \citep{yu2021reinforcement, ching2018opportunities, olawade2024artificial, melnychuk2022causal}. However, a major challenge in current DTR approaches is the absence of optimal treatment strategies, particularly for understudied diseases, critically ill or deceased patients as their outcomes may not reliably indicate the most appropriate clinical actions \citep{robins2000marginal, schulam2017reliable, chapfuwa2021enabling}. This inherent label uncertainty in DTR remains underexplored, limiting the robustness of existing models. Furthermore, most approaches lack theoretical guarantees on their recommendation quality, leaving practitioners without principled mechanisms for error rate control \citep[e.g.][]{benjamini1995controlling, lei2018distribution, bates2023testing, jin2023selection}. 

Additionally, many existing methods \citep[e.g.,][]{murphy2005generalization, laber2014dynamic, bica2020estimating} rely primarily on structured EHR data while underutilizing the valuable textual information contained in clinical notes, which often capture critical insights into a patient’s history and physician assessments. However, integrating clinical notes into DTR remains a challenge due to difficulties in establishing a unified embedding space that preserves inter-modality context, temporal alignment, and domain semantics. Prior work, such as \citet{choi2017gram, shang2019pre}, has explored representation learning for medication recommendation, but these approaches still rely heavily on medical code representations derived from structured EHR data.

We address these challenges through two key desiderata: (i) uncertainty control—the DTR model should quantify prediction uncertainty and provide statistically guaranteed control over the uncertainty discovery threshold specified by the user, and (ii) comprehensive information fusion—the model should integrate all available patient data, ensuring no critical information is overlooked. Together, we refer to these principles as ``risk awareness".

In this work, we propose SAFER, a Calibrated Risk-Aware multimodal framework for DTR that enhances the robustness of DTR frameworks. The overall pipeline is depicted in Figure~\ref{fig:framework}. SAFER introduces several key innovations:

\textbf{1. Multimodal Representation Learning.} SAFER integrates both structured EHR data and unstructured clinical notes using a novel Transformer-based architecture to learn a unified sequential patient representation. A self-attention mechanism captures inter-modality temporal dependencies, while cross-attention extracts contextual information across modalities (Section \ref{DTR prediction}).

\textbf{2. Uncertainty-Aware Training.} SAFER accounts for label uncertainty by assuming ambiguous treatment labels particularly for deceased patients. By recognizing that label uncertainty is systematic and predictable, we introduce an uncertainty quantification module that assigns per-label risk scores and incorporates them into a risk-aware loss function for interactive training (Section \ref{risk-aware fine-tuning}).

\textbf{3. Theoretical Guarantees on Prediction Reliability.} Given the critical need for error control in high-stakes scenarios, We derive theoretical guarantees on calibrated recommendations by innovatively employing a conformal inference framework \citep{vovk2005algorithmic, benjamini1995controlling} to control the expected proportion of unreliable predictions (i.e., FDR) at decision time (Section \ref{sec:conformal_fdr}).

\textbf{4. Empirical Validation.} We evaluate SAFER on real-world EHR benchmarks, demonstrating consistent improvements over state-of-the-art DTR methods across multiple recommendation metrics and reductions in counterfactual mortality rates.\footnote{Our code and dataset are avaliable at https://github.com/yishanssss/SAFER.}

Together, these advancements establish SAFER as a multimodal, risk-aware DTR framework with strong theoretical foundations and superior empirical performance.

\begin{figure*}[!ht]
    \centering
    \includegraphics[width=0.9\linewidth]{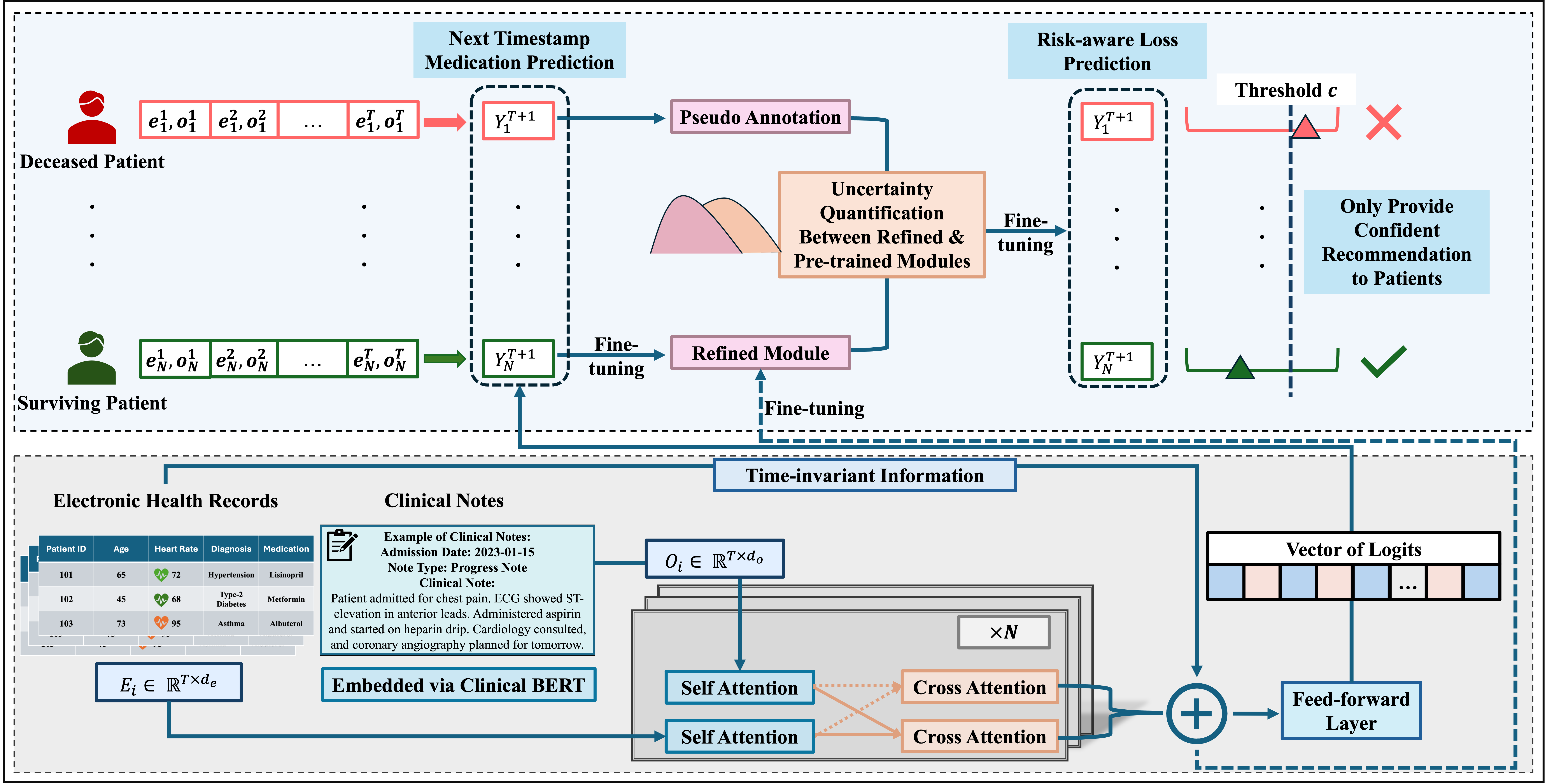}
    \caption{The overall framework of 
    SAFER.}
    \label{fig:framework}
\end{figure*}

\section{Related Works}
\textbf{Dynamic treatment regimes. }Prescribing medications in response to the dynamic states of patients is a challenging task. Over the past decade, to model complex, high-dimensional, and temporal healthcare data, researchers have leveraged various deep learning-based approaches to improve treatment recommendations, including RNNs and their variants \citep[e.g.,][]{choi2016retain, bajor2017predicting, jin2018treatment}, attention networks and transformer-based models \citep[e.g.,][]{peng2021gbert, wu2022conditional}, deep reinforcement learning (DRL) techniques \citep[e.g.,][]{bothe2013artificial, komorowski2018artificial, raghu2017continuous, saria2018individualized, wang2018supervised, zhang2017leap}, convolutional neural networks (CNNs) \citep[e.g.,][]{suo2017personalized, cheng2020medical, su2022tahdnet}, and generative adversarial networks (GANs) \citep[e.g.,][]{wang2021adversarially, wang2021self}. Despite these advancements, assessing the effectiveness and ensuring reliable inference for data-driven DTR approaches remains a significant challenge due to variability in evaluating the quality of suggested prescriptions \citep{hussein2012accurate, chakraborty2014inference}. For instance, while DRL excels in learning optimal DTRs by discovering dynamic policies, inconsistencies in reward design, policy evaluation, and MDP formulations often hinder standardized and rigorous healthcare applications \citep{luo2024position}. We hypothesize that these challenges can be mitigated by modeling predictive uncertainty and generating selective candidates via conformal inference, while providing statistical guarantees.


\textbf{Risk-aware treatment recommendation.} To the best of our knowledge, this work is the first to incorporate uncertainty modeling and employ conformal prediction (CP) into DTR research. Several studies have integrated drug-drug interaction knowledge to optimize personalized medication combinations and minimize adverse outcomes. \citep{shang2019gamenet, wu2022conditional, tan20224sdrug, yang2021safedrug}. In contrast, we incorporate an uncertainty quantification module to improve both the accuracy and safety of DTRs, ensuring statistically reliable treatment recommendations. As a pivotal role in optimization and decision-making process, prior work has utilized uncertainty quantification in computer vision \citep{liuuncertainty, harakeh2020bayesod}, image/video restoration \citep{shao2023uncertainty, dorta2018structured}, natural language processing \citep{chen2015study, lin2023generating, ren2023robots}, bioinformatics \citep{xia2020uncertainty, bian2020uncertainty}, etc. In the healthcare domain, \citep{chua2023tackling, liu2024uncertainty} exemplifies recent efforts to address prediction uncertainty in clinical machine learning models. While our framework is specifically designed for DTR prediction with rigorous theoretical guarantees via CP, we emphasize a complementary but distinct focus on label uncertainty in dynamic treatment regimes. We draw further inspiration from selective prediction, particularly CP-based selection \citep{bates2023testing, jin2023selection, gui2024conformal}, to develop an end-to-end safe DTR framework with formal assurances.

\textbf{Clinical notes combined with EHR.} Clinical notes, central to patient care, capture physicians' thought processes, observations, and treatment rationale often absent from structured EHR data \citep{sheikhalishahi2019natural, rosenbloom2011data}. Integrating clinical notes with structured EHRs has demonstrated improved predictive performance in biomedical research, enabling more comprehensive patient modeling \citep{gao2024improving, lyu2023multimodal}. However, current DTR research largely ignores clinical notes due to challenges like data heterogeneity, unstructured text processing, and the absence of standard tools, with most representation learning frameworks relying on structured medical codes \citep{choi2017gram, shang2019gamenet}. This work bridges these gaps by incorporating clinical notes into DTR frameworks through a novel transformer-based multimodal fusion approach to enhance decision-making accuracy and reliability.
\section{Problem Setup}

\subsection{Risk-aware Dynamic Treatment Prediction} \label{sec: 3.1}

We aim to model dynamic treatment prediction using two complementary data sources: structured electronic health records (EHR) $\mathcal{E} = \{\mathbf E_1, \mathbf E_2 \dots, \mathbf E_N\}$ and unstructured clinical notes $\mathcal{O} = \{\mathbf O_1, \mathbf O_2 \dots, \mathbf O_N\}$. For each patient $i$, where $i \in \{1, \dots, N\}$, structured data $\mathbf{E}_i$ are represented as a sequence of tabular records $\{\mathbf{e}^1_i, \mathbf{e}^2_i, \ldots, \mathbf{e}^T_i\}$, while unstructured data $\mathbf{C}_i$ consist of a sequence of clinical notes $\{\mathbf{o}^1_i, \mathbf{o}^2_i, \ldots, \mathbf{o}^T_i\}$. Each vector $\mathbf{e}^t_i \in \mathbb{R}^{d_{\mathcal{E}}}$ represents the tabular features at time step $t$, where $t \in {1, \dots, T}$ and $d_{\mathcal{E}}$ denotes the dimensionality of the tabular covariates. Similarly, $\mathbf{o}^t_i$ corresponds to the clinical note associated with patient $i$ at time step $t$. By integrating these two modalities, we aim to predict the medications to be administered in the next clinical decision window, i.e., time step $T+1$, enhancing treatment recommendations with both numerical clinical metrics and rich textual context.


A significant challenge in DTR lies in uncertainty associated with treatment labels, particularly for negative trajectories (i.e., patient states resulting in adverse outcomes). Specifically, patients with negative outcomes during hospital stay often exhibit unstable and irregular medication patterns, leading to label ambiguity. While some studies in recommendation disregard negative trajectories \citep[e.g.,][]{sun2021does, ye2024pail, ye2025harnessing}, others, like \citet{wang2020adversarial}, incorporate this information to refine learned policies and avoid repeating errors. Similarly, we retain negative trajectories but attribute their ambiguity to two primary cases: (1) appropriate treatments were administered but were insufficient to prevent death, and (2) the likelihood that incorrect treatments contributed to adverse outcomes. In contrast, patients with positive outcomes generally display labels that more reliably represent effective clinical decisions. We adopt above assumptions throughout this work.

We propose addressing label uncertainty by explicitly estimating an uncertainty score $\kappa_i$ for each candidate through an uncertainty quantification module. SAFER fine-tunes the model with awareness of candidates whose pseudo-annotations may lack reliability. The uncertainty score is integrated into a novel risk-aware loss function, mitigating the influence of uncertain optimal treatment labels.

\subsection{Conformal Inference for FDR Control}

In high-stakes domains such as treatment recommendation, it is vital to provide
calibrated predictions and control the rate of incorrect decisions. To achieve this, we adopt a \emph{conformal inference} procedure \citep{vovk1999machine,vovk2005algorithmic,shafer2008tutorial} that selects a subset of plausible prediction candidates with a statistical coverage guarantee. We assume access to a \emph{calibration set}
\[
  \mathcal{Z}_{\mathrm{cal}} \;=\; \bigl\{(\mathbf{x}_i, y_i)\bigr\}_{i=1}^n,
\]
where each $\mathbf{x}_i$ is an input embedding and $y_i$ an observed treatment label. All pairs $(\mathbf{x}_i, y_i)$ are drawn i.i.d.\ from the same distribution as the training data set which used to train a predictor
$f \colon \mathcal{X}\to\mathcal{Y}$. We then have $m$ new test samples
$\{\mathbf{x}_{n+j}\}_{j=1}^m$ with true but \emph{unobserved} labels $\{y_{n+j}\}_{j=1}^m$, each drawn i.i.d.\ from the same (unknown) data‐generating process.

To address the risk of recommending incorrect treatments, we compute an \emph{uncertainty score}
$\kappa_i$ for each patient $i \in [\,n+m\,]$ via an ``uncertainty map'' module, and convert these
scores into \emph{conformal p-values} \citep[e.g.][]{vovk1999machine,vovk2005algorithmic,bates2023testing,jin2023selection,liang2024integrative} to control false discovery rate (FDR). The FDR is defined as
\begin{align*}
      \mathrm{FDR} = \mathbb{E}\Bigl[\frac{V}{R} \Bigr],
      \text{where}
    \quad
    R &= \text{(total \# of rejections)},\\
    \quad
    V &= \text{(\# of false rejections)}.
\end{align*}
Concretely, we define a null hypothesis 
\begin{equation}
    H_j:\kappa_j \ge c,\, j=1,\dots,m,
\end{equation}
for each test sample $j$,
and reject (i.e., recommend) a subset $\mathcal{S} \subseteq \{1,\dots,m\}$ of hypotheses while ensuring
$\mathrm{FDR} \le \alpha$, where $\alpha \in (0,1)$ is a user-specified tolerance, $c$ is a predefined uncertainty threshold.

This setup can be viewed as a standard \emph{multiple testing} problem
\citep{benjamini1995controlling,benjamini1997multiple,benjamini2001control,efron2012large} 
with $m$ null hypotheses $H_1, \dots, H_m$. We compute a conformal p-value $p_j$ for each
$H_j$ and apply an FDR‐controlling procedure (e.g., \citealp{benjamini1995controlling}) to obtain a set $\mathcal{S}$ with the desired error rate $\alpha$. In binary classification tasks, the FDR serves as an analog to Type-I error control\citep{hastie2009elements}. For regression problems with continuous responses, controlling errors is appropriate when each selected candidate incurs a comparable cost. This approach is particularly crucial in scenarios like medical decision-making and knowledge retrieval, where the cost of committing a type-I error can be significant and should be a primary consideration. In high-stakes clinical settings, a falsely recommended treatment can lead to adverse outcomes. By constraining $\mathrm{FDR} \leq \alpha$, clinicians can trust that the expected fraction of incorrect recommendations remains safely bounded, thereby enhancing patient safety in the automated decision-making process.


\section{Method}

\subsection{Dynamic Treatment Prediction}
\label{DTR prediction}
In practice, clinicians typically rely on both structured data and clinical notes to monitor disease progression and guide treatment decisions~\cite{assale2019revival, gangavarapu2020farsight}. Hence to mimic the real-world clinician decision-making process and recommend treatments at the next time step, we construct a unified time-series representation of the patient health embeddings by integrating these complementary data sources. To be more specific, given a patient $i$ represented by their multimodal electronic health record sequence $\mathbf{r}_i = \{(\mathbf{e}^1_i, \mathbf{o}^1_i), (\mathbf{e}^2_i, \mathbf{o}^2_i), \ldots, (\mathbf{e}^T_i, \mathbf{o}^T_i)\}$, where $\mathbf{e}^t_i \in \mathbb{R}^{d_\mathcal{E}}$ and $\mathbf{o}^t_i \in \mathbb{R}^{d_\mathcal{C}}$ denote the structured EHR data and clinical notes at time $t$, our goal is to predict a treatment recommendation $\hat{y}^{T+1}_i$ for the next time step $T+1$.

\textbf{Inter-modality Temporal Dependency.} We first project the information from each data source into dense latent spaces. Specifically, we use BioClinicalBERT~\footnote{https://huggingface.co/emilyalsentzer/Bio\_ClinicalBERT} to encode clinical notes, modeled as $\mathcal{X}^W$~\cite{alsentzer2019publicly}, which provides superior performance in encoding clinical text due to its bidirectional attention mechanism and domain-specific pretraining on large-scale biomedical and clinical corpora~\cite{huang2023chatgpt, hu2024accurate, zhang2022shifting}. On the other hand, tabular data is encoded as $\mathcal{X}^C$ with normalization and one-hot encoding.

For a patient $p_i$, the sequence of each modality is aligned with the timestamps. The embedding layers $f: \mathcal{X}^A \rightarrow \mathbb{R}^{T \times d_k}$ are then applied to each modality, where $A \in \{E, O\}$ and $d_k$ is model dimensionality. To capture temporal dependencies within each modality, masked self-attention mechanisms are applied as,
{\small
\begin{equation}
    \mathbf{S}^A_i = \text{Softmax} \left( \frac{ (\mathbf{X}^A_i \mathbf{W}_A^Q) (\mathbf{X}^A_i \mathbf{W}_A^K)^\top + \mathbf{M}}{\sqrt{d_k}} \right) \mathbf{X}^A_i \mathbf{W}_A^V + \textbf{PE}.
\end{equation}
}
\noindent where $\mathbf{M} \in \mathbb{R}^{T \times T}$ is the causal mask matrix, and $\textbf{PE}\in \mathbb{R}^{T \times d_k}$ is the sinusoidal position encoding.

\textbf{Cross-modality information integration.} 
To effectively integrate information from different data sources, we design a cross-attention mechanism that enables different sequences to learn contextual information from each other, which can be formulated as,
{\small
\begin{align}
\mathbf{H}_i =&\left( \text{softmax}\left(\frac{\mathbf{S}^O_i \mathbf{W}_E^Q (\mathbf{S}^E_i \mathbf{W}_E^K)^T}{\sqrt{d_k}}\right) \mathbf{S}^E_i \mathbf{W}_E^Q \right)\\
\oplus &\left( \text{softmax}\left(\frac{\mathbf{S}^E_i \mathbf{W}_O^Q (\mathbf{S}^E_i \mathbf{W}_O^K)^T}{\sqrt{d_k}}\right) \mathbf{S}^E_i \mathbf{W}_O^V \right).
\end{align}
}
where $\mathbf{S}^E_i$ and $\mathbf{S}^O_i$ are temporal-aware representations for each modality, and $\oplus$ denotes the concatenation operation. After integrating the static information embedding $\mathbf{x}_i^D$, the final representation is given by $\mathbf{h}_i = \mathbf{H}^T_i \oplus \mathbf{x}_i^D$, where $\mathbf{h}_i \in \mathbb{R}^{3d_k}$ represents the unified patient embeddings. 

Subsequently, a feedforward network-based classification layer is applied to produce a probability distribution over medication classes for the next time step, as $f_\theta: \mathcal{H} \rightarrow \mathbb{R}^{|\mathcal{Y}|}$. The model is trained using cross-entropy loss to minimize prediction error across all instances. 

\subsection{Risk-Aware Fine-Tuning}
\label{risk-aware fine-tuning}

After the dynamic treatment prediction module converges, we introduce a risk-aware fine-tuning procedure to account for label uncertainty as stated in Section \ref{sec: 3.1}, assuming reliable labels for surviving patients while uncertain labels for deceased patients. However, training exclusively on surviving patients significantly discards valuable information. To mitigate this, we treat labels for deceased patients as pseudo-labels and propose an uncertainty module to incorporate the brought risky information effectively.

\textbf{Uncertainty Estimation.}
Here, we refine predictions for surviving patients by introducing a multilayer perceptron-based module $f_{\phi}$. This module takes the patient embeddings $h_i$, learned in the previous stage, as input to generate a new predictive distribution for each surviving patient. The uncertainty module is trained exclusively on surviving patients using cross-entropy loss to minimize the prediction error.

Since predictions have been refined for surviving patients, who exhibit more stable patterns and have reliable labels, the KL divergence between the logits before and after refinement can capture the distributional difference between surviving and deceased patients. This divergence, as shown in the equation below, can be interpreted as a measure of uncertainty for deceased patients.

During model inference, we quantify the predictive uncertainty by computing the KL divergence between the output distributions of the two modules $f_\theta$ and $f_\phi$ as follows,

{\small
\begin{equation}
\kappa_i = D_{\text{KL}}\left(p_{\theta}(\mathbf{h}_i) \parallel p_{\phi}(\mathbf{h}_i)\right) = \sum_{l=1}^L p_{\theta}(\widehat{y_i}=l |\mathbf{h}_i) \ln\frac{p_{\theta}(\widehat{y_i}=l |\mathbf{h}_i)}{p_{\phi}(\widehat{y_i}=l |\mathbf{h}_i)},
\end{equation}
}
where $p(h_i)=\text{Softmax}(f(h_i))$ denotes the predicted probability distributions from both module, and $\widehat{y_i}$ represents the predicted class.
\begin{theorem}
\label{thm:uncertainty}
Let $h^-\sim P^-(h)$ and $h^+\sim P^+(h)$ denote the latent representations of survivors and deceased patients respectively. Under the following conditions,
\begin{enumerate}
    \item $D_{KL}(P^-(h) \parallel P^+(h)) > 0$, i.e.\ $P^{-} \neq P^{+}$
    \item $f_\phi$ is $L$-Lipschitz continuous over latent representation space $\mathcal{H}$, where $h\in \mathcal H$,
\end{enumerate}
that is there exists a constant $c>0,$ such that
{\small
\begin{equation}
\mathbb{E}_{h \sim P^-}\left[\kappa_i\right] - \mathbb{E}_{h \sim P^+}\left[\kappa_i\right] \geq c > 0.
\end{equation}
}
\end{theorem}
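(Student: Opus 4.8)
The plan is to treat the uncertainty score as a deterministic function $\kappa(h) = D_{\mathrm{KL}}\big(p_\theta(h)\,\|\,p_\phi(h)\big)$ on $\mathcal{H}$ and to lower-bound the signed integral $\mathbb{E}_{h\sim P^-}[\kappa(h)] - \mathbb{E}_{h\sim P^+}[\kappa(h)] = \int_{\mathcal H}\kappa(h)\,(dP^- - dP^+)$. The argument rests on the structural asymmetry built into training: $f_\phi$ is optimized by cross-entropy \emph{exclusively} on the survivor law $P^-$, whereas the base head $f_\theta$ was fit to the full, label-noisy population. I would first make precise the sense in which $f_\phi$ ``refines'' $f_\theta$ on survivors, namely that minimizing $\mathbb{E}_{P^-}[\mathrm{CE}(y, p_\phi)]$ drives $p_\phi$ toward the reliable survivor conditional label distribution, which differs from $p_\theta$ (the latter being compromised by the ambiguous deceased labels). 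This yields a survivor-side lower bound $\mathbb{E}_{P^-}[\kappa(h)] \ge a$ for some $a>0$: if instead $p_\phi \equiv p_\theta$ holds $P^-$-almost everywhere, the refinement would be vacuous, contradicting that it strictly improves the survivor fit over $f_\theta$.

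Next I would control the deceased side from above using the $L$-Lipschitz hypothesis (condition~2). Since $f_\phi$ is $L$-Lipschitz on $\mathcal H$, its logits are uniformly bounded over the representation space, so $p_\phi = \mathrm{softmax}(f_\phi(\cdot))$ has every coordinate bounded away from $0$; this keeps $\ln\!\big(p_\theta/p_\phi\big)$ finite and furnishes a uniform envelope $\kappa(h) \le B$, guaranteeing that both expectations exist. Crucially, on $\mathrm{supp}(P^+)$ the module $f_\phi$ receives no training signal, so its smooth ($L$-Lipschitz) extrapolation cannot reproduce the sharp, confident refinement it performs on survivors; formally I would show $f_\phi$ stays within an $O\big(L\cdot W_1(P^-,P^+)\big)$ neighborhood of its survivor behavior, so its confidence—and hence $\kappa$—cannot spike on $P^+$, giving $\mathbb{E}_{P^+}[\kappa(h)] \le b$. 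Condition~1, $D_{\mathrm{KL}}(P^- \parallel P^+)>0$, is what forbids the degenerate case $P^+ = P^-$ (which would force the gap identically to zero) and ensures the high-$\kappa$ survivor region and the deceased region are weighted differently under the two laws. Setting $c = a - b$ then delivers the claim.

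The main obstacle is rigorously establishing the \emph{sign together with the strict separation} $a>b$, rather than merely a nonzero gap. This requires quantitatively tying the cross-entropy objective of $f_\phi$ to the survivor-side lower bound $a$ (through the strict improvement-over-$f_\theta$ gap) and, symmetrically, tying the Lipschitz smoothness plus the absence of a deceased training signal to the strictly smaller deceased-side bound $b$. The delicate point is ruling out configurations in which $P^+$ overlaps the high-$\kappa$ survivor region, which could erode the separation; this is precisely where the strict inequality $D_{\mathrm{KL}}(P^-\parallel P^+)>0$ must be converted from a statement about the laws into a statement about the $\kappa$-weighted averages. I expect the cleanest route is to couple the improvement gap on $P^-$ with the Wasserstein-controlled stability of $f_\phi$ on $P^+$, so that the survivor baseline strictly dominates and $c>0$ emerges as their difference.
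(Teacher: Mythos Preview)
Your argument has the sign of the gap inverted, and this is a substantive error rather than a notational one. In the paper's setup $f_\phi$ is trained \emph{only} on survivors, so on survivor latents both $p_\theta$ (already fit to the reliable survivor labels as part of the full population) and $p_\phi$ approximate the same conditional and therefore $\kappa=D_{\mathrm{KL}}(p_\theta\,\|\,p_\phi)$ is \emph{small} there. It is on the \emph{deceased} latents---where $p_\phi$ has no training signal and smoothly extrapolates, while $p_\theta$ has been pulled toward noisy deceased labels---that the two heads disagree and $\kappa$ is \emph{large}. (The theorem as stated in the main text labels $P^-$ as survivors, but the appendix proof explicitly restates it with $P^-$ for deceased and $P^+$ for survivors; this is also the only direction consistent with the risk-aware loss $(1-\hat\kappa)\cdot\mathrm{CE}$ and the empirical case study.) Your proposal argues the opposite: that refinement makes $p_\phi$ \emph{diverge} from $p_\theta$ on survivors, and that Lipschitz smoothness keeps $\kappa$ from ``spiking'' on deceased. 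But Lipschitz continuity of $f_\phi$ says nothing about $p_\theta$; it cannot prevent $p_\theta$ from being far from $p_\phi$ on deceased latents, which is exactly the mechanism that drives $\kappa$ up there.

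The paper's proof proceeds in the correct direction and with a different structure. It first isolates a measurable region $\mathcal{G}\subset\mathcal{H}$ on which $\|p_\theta(\cdot\mid h)-p_\phi(\cdot\mid h)\|_1\ge\delta$ and $P^-(\mathcal{G})-P^+(\mathcal{G})\ge\pi>0$ (the latter existing by condition~1). Pinsker's inequality then gives $\kappa(h)\ge\tfrac12\delta^2$ pointwise on $\mathcal{G}$, hence $\mathbb{E}_{P^-}[\kappa]\ge\tfrac12\delta^2\,P^-(\mathcal{G})$ for the deceased side. The survivor expectation is split over $\mathcal{G}$ and its complement: on $\mathcal{H}\setminus\mathcal{G}$ the student's training optimality bounds the contribution by the held-out survivor risk $\varepsilon$; on $\mathcal{G}$ the $L$-Lipschitz property of $f_\phi$ is used to compare $\kappa(h)$ to its value at a nearby survivor-dense point, yielding a small residual $\varepsilon_1$. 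Subtraction gives $c=\tfrac12\delta^2\pi-\varepsilon-\varepsilon_1>0$. Note in particular that the Lipschitz hypothesis is used to \emph{upper}-bound survivor uncertainty on the off-support part of $\mathcal{G}$, not---as in your sketch---to cap deceased-side uncertainty.
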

The proof details can be found in Appendix~\ref{appendix: thm4}. Hence in this regard, such KL divergence can capture the distributional difference between survival and deceased patients, served as a valid measure of uncertainty where $\kappa_i$ represents the prediction uncertainty. 

\begin{remark}
A Lipschitz-constrained student (e.g. weight decay plus spectral normalisation) is a standard practice when one wishes to avoid uncontrolled extrapolation outside the training manifold. In our approach, we rely on the multilayer perceptron-based architecture, which inherently exhibits Lipschitz continuity when the activation functions are smooth and bounded, and the model’s weights are appropriately regularized \citep{gouk2021regularisation}.
\end{remark}

\textbf{Risk-aware Loss Function.}
The uncertainty term is incorporated into the loss function during fine-tuning. For surviving patients, uncertainty remains minimal, enabling the training process to prioritize these samples. Conversely, for deceased patients, the loss function penalizes significant deviations, ensuring the model effectively captures risk. The modified loss function is defined as follows,
\begin{equation}
\mathcal{L} = -\frac{1}{N} \sum_{i=1}^N (1 - \hat{\kappa_i}) \sum_{l=1}^L y_i \log p_\theta(\widehat{y_i}=l|h_i) + \gamma \kappa_i^2,
\end{equation}
where $\hat{\kappa_i}$ is the normalized uncertainty term and $\gamma$ controls the regularization strength, penalizing overconfident predictions for high-risk cases.

\section{Conformal Selection and FDR Control}
\label{sec:conformal_fdr}

We begin by fitting our risk-aware model on the training set. Subsequently, for calibration sample $\{(\mathbf{x}_i, y_i)\}_{i=1}^n$ and unlabeled test data $\{\mathbf{x}_{n+j}\}_{j=1}^m$, we compute the predicted uncertainty score $\widehat \kappa_i = \widehat \kappa(\mathbf{x}_i)$, for every $i \in [\,n+m\,]$.
Here, $\widehat \kappa \colon \mathcal{X} \to \mathbb{R}$ is an uncertainty score predictor that depends on the patient health trajectory (e.g., time-series of clinical measurements), rather than on the observed treatment label $y_i$. As described in the previous section, $\kappa(\mathbf{x}_i)$
captures a ``label mismatch risk'' based on the distributional divergence between an
\emph{refined module} and a \emph{fine-tuned module}. Crucially, this score does
not require knowledge of the final treatment $y_i$. We require that $\widehat \kappa$ is computed in the same way for calibration and test samples. This consistent definition of $\kappa$ across calibration
and test sets preserves the exchangeability necessary for valid conformal inference.

\textbf{Conformal p-Value.} Consider a test sample $j \in [m]$ for which we wish to test the hypothesis 
\(
   H_j \colon \kappa_{n+j} \ge c.
\)
Then we define the conformal p-value
\begin{align}
    p_j 
  \;&=\;
  \frac{
  \sum_{i=1}^n\mathds{1}\Bigl\{
    \widehat \kappa_i < \widehat \kappa_{n+j}, \kappa_i\ge c
  \Bigr\}
  }{
    n \;+\; 1
  }\notag\\
  & +\frac{U_j\cdot
    (1 \;+\; 
    \sum_{i=1}^n 
      \mathds{1}\!\Bigl\{
       \widehat \kappa_i = \widehat \kappa_{n+j}, \kappa_i\ge c
      \Bigr\})}
      {n \;+\; 1}. \label{p-value}
\end{align}
where $U_j \sim \mathrm{Unif}(0,1)$ are i.i.d.\ random variables used for tie-breaking under the multiple testing setting. Intuitively, $p_j$ measures how frequently the predicted calibration uncertainty scores 
$\{\widehat \kappa_i\}_{i=1}^{n}$ are less than or equal to the test uncertainty score 
$\widehat \kappa_{n+j}$, restricted to those $\kappa_i \ge c$.

Traditionally, conformal p-values are constructed to be \emph{super-uniform} under the null, meaning that if the tested label (or score) truly matches the data-generating distribution, 
then
\(
  \mathbb{P}\bigl(p_j \le \alpha\bigr) \;\le\; \alpha
\)
for all $\alpha \in [0,1]$ \citep{vovk2005algorithmic,lei2018distribution}. Here, the setup follows 
\citet{jin2023selection}, who define $H_j : Y_{n+j} \le c_j$ for random hypotheses based on unobserved labels. In the present formulation~\eqref{p-value}, the p-value satisfies a selective guarantee \citep{jin2023selection}, namely:
\begin{equation}
  \label{eq:sel_guarantee}
    \mathbb{P}
    \Bigl[
       \bigl(j \in \mathcal{S}\bigr)
       \;\wedge\;
       \bigl(p_j \le \alpha\bigr)
    \Bigr]
    \;\le\;
    \alpha,
    \quad
    \forall \;\alpha \in [0,1],
\end{equation}
where $\mathcal{S}$ is the final selected (``rejected'') set. In other words, the joint event 
that $j$ is included in the recommendation set and $p_j \le \alpha$ occurs with probability 
no larger than $\alpha$. 

\textbf{Benjamini--Hochberg (BH) Procedure.} After computing conformal p-values $p_1, \dots, p_m$ for the test samples, we control the FDR via the classic Benjamini--Hochberg (BH) algorithm 
\citep{benjamini1995controlling}. First, sort the p-values in ascending order:
\[
  p_{(1)} \;\le\; p_{(2)} \;\le\;\cdots\;\le\; p_{(m)}.
\]
Then, let
\[
  k 
  \;=\; 
  \max\Bigl\{
    r : p_{(r)} 
    \;\le\; 
    \tfrac{\alpha \, r}{m}
  \Bigr\},
\]
where $\alpha \in (0,1)$ is the user-specified FDR threshold. If no such $r$ satisfies the inequality, 
we set $k=0$. The BH procedure ``rejects'' the $k$ smallest p-values, i.e.\ the set 
$\{\,p_{(1)},\dots,p_{(k)}\}$. Accordingly, our \emph{conformal selection} output is
\[
  \mathcal{S}
  \;=\;
  \bigl\{
    j \in [m] : p_j \,\le\, p_{(k)}
  \bigr\},
\]
meaning we only recommend labels whose p-values rank among these top $k$. 

Below, we show that this procedure, using p-values of the form~\eqref{p-value}, controls the FDR under suitable assumptions.
\begin{theorem}
\label{thm:fdr_control}
Assume we have
\begin{enumerate}
    \item The calibration data $\{(\mathbf{x}_i,y_i)\}_{i=1}^n$ and test data $\{\mathbf{x}_{n+j}\}_{j=1}^m$ are i.i.d., and data in $\{(\mathbf{x}_i,y_i)\}_{i=1}^n\,\cup\,\{\mathbf{x}_{n+l}\}_{l\ne j} \, \cup \{\mathbf{x}_{n+j}\}$ are mutually independent for any $j\in[m]$.
    
    \item There exists some $M \ge 0,$ such that $\sup_{\mathbf x} \kappa (\mathbf{x})\le M$.
\end{enumerate}
Then, for any user-defined threshold $\alpha \in (0,1)$, the BH-based conformal selection set 
$\mathcal{S}$ satisfies
\begin{align}
  \mathrm{FDR}
  \;:&=\;
  \mathbb{E}\Bigl[\tfrac{V}{\max\{1,\,R\}}\Bigr] \notag \\
  &=\;
  \mathbb{E}
  \Bigl[
    \tfrac{\sum_{j=1}^m \mathds{1}\{H_j \text{ is true},\, j \in \mathcal{S}\}}
          {\max \big\{1,\,\sum_{j=1}^m \mathds{1}\{j \in \mathcal{S}\}\big\}}
  \Bigr]
  \;\le\; \alpha.
\end{align}
\end{theorem}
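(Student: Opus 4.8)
The plan is to follow the conformal-selection FDR argument of \citet{jin2023selection}, adapted to the uncertainty-score setting here. Write $\mathcal{H}_0 = \{\, j \in [m] : \kappa_{n+j} \ge c \,\}$ for the (random) set of true null test points, and let $R = |\mathcal{S}|$ be the number of BH rejections. By the defining self-consistency of the BH cutoff, $j \in \mathcal{S}$ forces $p_j \le \alpha R / m$. Expanding the FDR and interchanging sum and expectation,
\[
  \mathrm{FDR}
  = \sum_{j=1}^m
    \mathbb{E}\!\left[
      \mathds{1}\{ j \in \mathcal{H}_0 \}\,
      \frac{\mathds{1}\{ j \in \mathcal{S} \}}{\max\{1, R\}}
    \right],
\]
so it suffices to bound each summand by $\alpha / m$.

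First I would establish the \emph{selective super-uniformity} of $p_j$, i.e.\ \eqref{eq:sel_guarantee}. Fix $j$ and condition on the event $\{j \in \mathcal{H}_0\}$ and on the unordered multiset $\{\widehat\kappa_i : i \le n,\ \kappa_i \ge c\} \cup \{\widehat\kappa_{n+j}\}$. Under Assumption~1 the pairs are i.i.d.\ and $\widehat\kappa$ depends on $\mathbf{x}$ only, so conditionally on this bag the position (rank) of $\widehat\kappa_{n+j}$ among the retained calibration scores is uniform; the randomized tie-breaking term with $U_j \sim \mathrm{Unif}(0,1)$ in \eqref{p-value} then makes $p_j$ exactly (super-)uniform on $[0,1]$ given this conditioning, with the boundedness in Assumption~2 serving as the regularity condition that keeps the ranks and the construction well-posed. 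Marginalizing yields $\mathbb{P}[\,j \in \mathcal{S},\ p_j \le \alpha\,] \le \alpha$ for all $\alpha$, and in fact the stronger statement that $p_j$ is conditionally super-uniform given all information relevant to the other p-values.

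Next I would decouple $\mathds{1}\{ j \in \mathcal{S}\} / \max\{1,R\}$ from $p_j$. The standard device is the leave-one-out (Storey-type) representation: on $\{ j \in \mathcal{S},\ R = r\}$ one has $r = R^{-j}(\alpha r / m)$, where $R^{-j}(t)$ is the number of BH rejections computed from $\{p_i\}_{i \ne j}$ after setting $p_j = 0$, a quantity that does not depend on $p_j$ and is non-increasing in the threshold $t$. Combining this with the conditional super-uniformity of $p_j$ from the previous step yields $\mathbb{E}\big[\mathds{1}\{j \in \mathcal{H}_0\}\,\mathds{1}\{ j \in \mathcal{S}\}/\max\{1,R\}\big] \le \alpha/m$, and summing over $j$ gives $\mathrm{FDR} \le \alpha$. (Equivalently, one can verify that the $m$ conformal p-values are PRDS on $\mathcal{H}_0$ — a monotone-rank argument on the shared calibration sample in the spirit of \citet{bates2023testing} — and invoke the Benjamini--Yekutieli theorem.)

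The main obstacle is that the null set $\mathcal{H}_0$ is itself random, depending on the \emph{unobserved} scores $\kappa_{n+j}$, while the $m$ p-values are mutually dependent because they all reuse the single calibration sample. Both difficulties are handled by the conditioning in the first step — fixing the unordered bag of scores and the null/non-null pattern among the calibration points — under which each null test score has a uniform rank and the cross-dependence collapses to the tractable PRDS structure. Making this conditioning precise, and checking that the randomized correction in \eqref{p-value} preserves exactness of the uniform bound (this is where Assumption~2 enters), is the delicate part of the argument.
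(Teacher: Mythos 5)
Your high-level architecture matches the paper's: both routes go through the conformal-selection argument of \citet{jin2023selection} (selectively super-uniform conformal p-values, then a leave-one-out/PRDS argument to push through BH). The difference is at the one step the paper actually works out rather than cites, and it is exactly the step your sketch leaves vague. The paper does \emph{not} argue super-uniformity of \eqref{p-value} directly. It first builds the augmented nonconformity score $J(\mathbf{x},y)=\kappa(\mathbf{x})+2M\,\mathds{1}\{y\ge c\}$ --- this is precisely where Assumption~2 ($\sup_{\mathbf x}\kappa\le M$) enters --- and shows that with $J_i=J(\mathbf{x}_i,y_i)$ and $\widehat J_{n+j}=J(\mathbf{x}_{n+j},c)$ the p-value \eqref{p-value} becomes a standard monotone-score conformal p-value of the form $\bigl(\sum_{i\le n}\mathds{1}\{J_i<\widehat J_{n+j}\}+\text{randomized ties}\bigr)/(n+1)$, at which point Theorem~3 of \citet{jin2023selection} applies. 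The $2M$ shift does two jobs: it makes the score clipped/monotone in the label, so plugging in the boundary value $c$ is conservative for the composite null $\kappa_{n+j}\ge c$; and it forces every non-null calibration point ($\kappa_i<c$, hence $J_i\le M<\widehat J_{n+j}$) into the comparison, which is what justifies the $n+1$ normalization.

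Your proposal is gapped precisely there. You condition on the bag consisting of the \emph{null} calibration scores together with $\widehat\kappa_{n+j}$ and argue that the test rank is uniform; that argument yields (super-)uniformity of a statistic normalized by $\#\{i\le n:\kappa_i\ge c\}+1$, not by $n+1$ as in \eqref{p-value}. Because the numerator of \eqref{p-value} counts only null calibration points while the denominator is $n+1$, rank-uniformity among the nulls does not give $\mathbb{P}(j\in\mathcal S,\,p_j\le\alpha)\le\alpha$: for instance, if no calibration point has $\kappa_i\ge c$, then $p_j=U_j/(n+1)$, which is far from super-uniform. The missing bridge is the bounded clipped score above, and your description of Assumption~2 as a ``regularity condition that keeps the ranks well-posed'' does not supply it; likewise, the reduction of the composite null $\kappa_{n+j}\ge c$ to its boundary value $c$ via label-monotonicity of the score is only implicit in your write-up. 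The back end of your argument --- the Storey-type leave-one-out decoupling, or PRDS plus Benjamini--Yekutieli --- is sound and is essentially what the cited theorem delivers, but without the score construction the super-uniformity step, which is the heart of the proof, is not established.
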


The proof details are provided in Appendix \ref{appendix: thm}. By combining the conformal $p$-value construction in \eqref{p-value} with the BH procedure, our method ensures that the \emph{expected proportion} of unreliable recommended treatments remains bounded by $\alpha$. By selecting an appropriate uncertainty score function $\kappa$, where higher $\kappa$ values correspond to lower plausibility of the candidate label $y$, we enable an intuitive calculation of conformal $p$-values. This framework provides a practical safety margin for high-stakes applications, while supporting flexible and data-driven selection of plausible labels.

\begin{table*}[ht]
\centering
\caption{The overall performance of SAFER and baseline methods. ($p<0.05$)}
\label{table:overrall}
\resizebox{\textwidth}{!}{  
\begin{tabular}{c|ccccc|ccccc}
\hline
        & \multicolumn{5}{c|}{MIMIC-III}             & \multicolumn{5}{c}{MIMIC-IV}               \\
Methods  & MI-AUC & MA-AUC & HR@3   & MRR@3  & $\downarrow$ Mortality & MI-AUC & MA-AUC & HR@3 & MRR@3 & $\downarrow$ Mortality \\ \hline

LSTM     & 0.9122 & 0.7934 & 0.7481 & 0.8015 & 0.0915                                  &    0.9213    &   0.8121     &   0.7551   &  0.8066     &   0.1051                                      \\
RETAIN   & 0.9257 & 0.8219 & 0.8324 & 0.8153 & 0.1994                                  &    0.9279    &  0.7851      &   0.8017   &  0.8052     &     0.1863                                    \\
TAHDNet  & 0.9213 & 0.8017 & 0.7123 & 0.8109 & 0.2214                                  &   0.9157     &  0.8274      & 0.7554     &  0.8315     &  0.2466                                       \\ \hline
Naive RL & 0.7436 & 0.6025 & 0.5303 & 0.8891 & 0.0881                                  &  0.6782      &   0.5971     &  0.5068    &  0.8217     &   0.1172                                      \\
SRL-RNN  & 0.8751 & 0.6215 & 0.7722 & 0.7916 & 0.3124                                  &   0.8781     &  0.6982      &  0.7824    &   0.8151    &           0.3219                              \\
ACIL     & 0.8219 & 0.7012 & 0.8013 & 0.8313 & 0.3212                                  &   0.8854     &  0.7135      & 0.8319     &  0.8441     &   0.3782                                      \\
ISL      & 0.8903 & 0.7785 & 0.7623 & 0.7521 & 0.2783                                  &  0.8713      &   0.7315     &  0.7741    &   0.7229    &    0.3118                                     \\ \hline
SAFER    & \textbf{0.9407} & \textbf{0.8672} & \textbf{0.8517} & \textbf{0.9017} & \textbf{0.3891}                                  &   \textbf{0.9356}    &   \textbf{0.8755}     &   \textbf{0. 8713}  &   \textbf{0.8698}    &           \textbf{0.4562}                              \\ \hline
\end{tabular}
}
\end{table*}

\section{Experiments}
We empirically validate the SAFER model on two sepsis cohorts derived from publicly available datasets, Medical Information Mart for Intensive Care (MIMIC)-III covering over 40,000 ICU stays (2001–2012) \citep{johnson2016mimic} and MIMIC-IV with over 65,000 ICU and 200,000 ED admissions (2008–2019) \citep{Johnson2023MIMICIVNote, johnson2023mimic}, to evaluate recommendation accuracy and FDR control. For this study, we define cohorts based on the sepsis-3 criteria \citep{singer2016third}, focusing on the early stages of sepsis management—24 hours prior to and 48 hours after sepsis onset. The treatment selection involves intravenous fluid and vasopressor dosage within a 4-hour window, mapped to a $5 \times 5$ medical intervention space, following \citet{komorowski2018artificial}. Figure \ref{fig:med_freq} shows the distribution of sepsis treatment co-occurrence in the two cohorts.
\vspace{-2mm}
\begin{figure}[!ht]
    \centering
    \subfigure{
        \includegraphics[width=0.46\columnwidth]{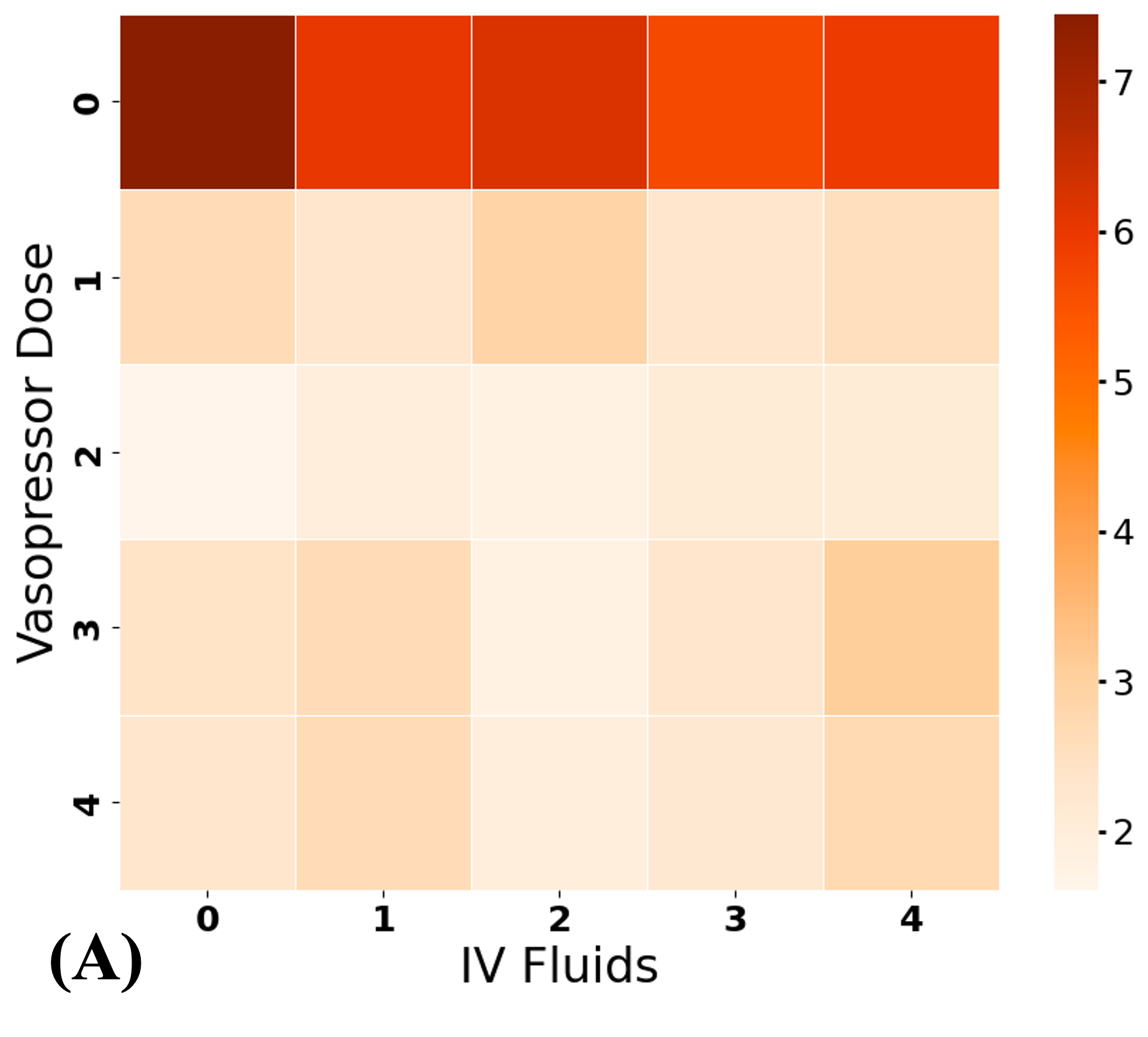}
    }
    \subfigure{
        \includegraphics[width=0.46\columnwidth]{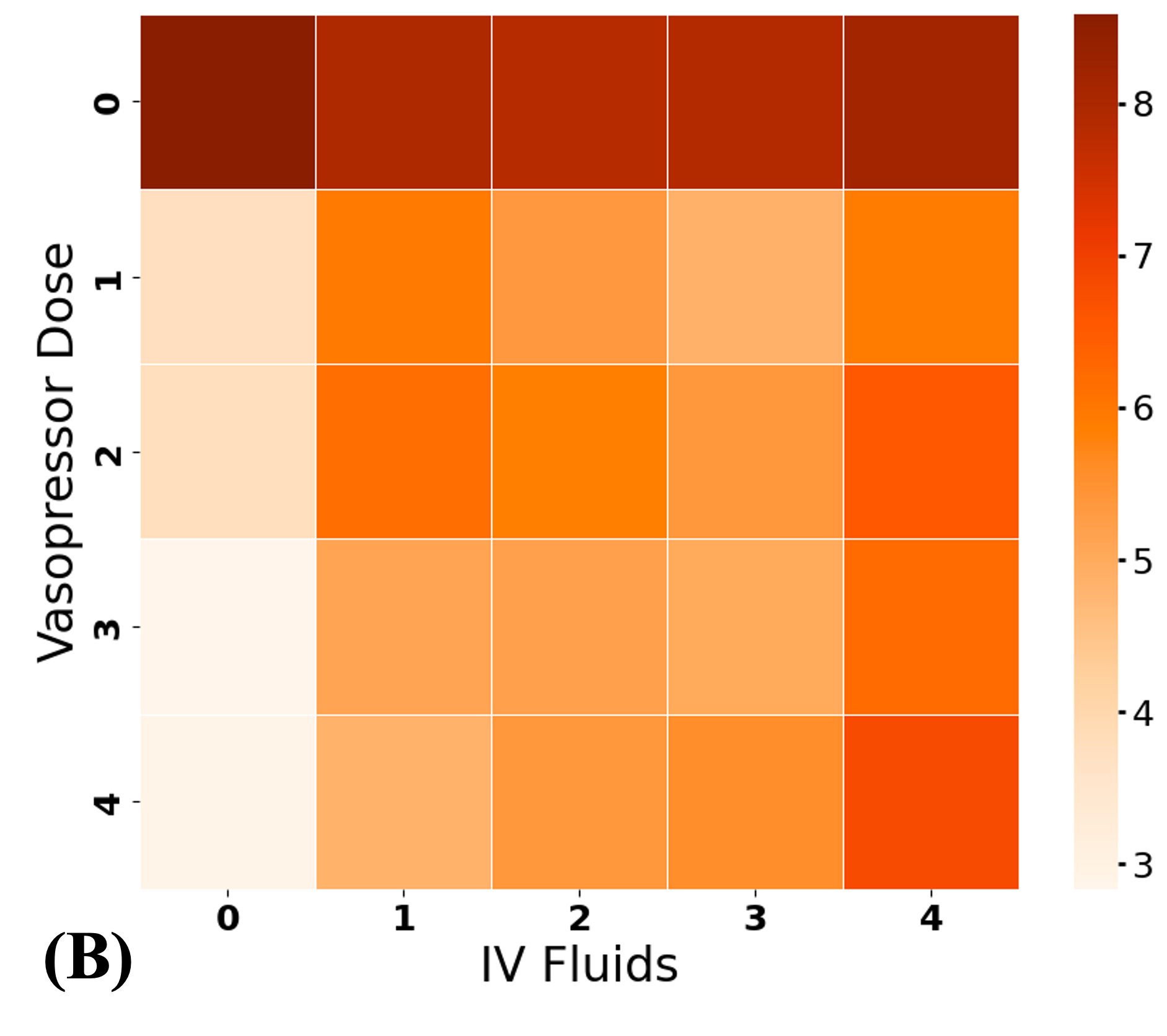}
    }
    \caption{Comparative visualization of the treatment frequency matrix in $\log$ scale from two datasets. Panel (A) represents MIMIC-III, while Panel (B) corresponds to MIMIC-IV.}
    \label{fig:med_freq}
\end{figure}
\vspace{-4mm}

For each patient, we extract 5 types of static demographic variables and 44 types of time-series variables from the tabular data. We set the historical sequence length to 8. All clinical notes were aligned to the closest timestamp. To handle outliers, we applied the interquartile range (IQR) method for removal and imputed missing values using the $k$-nearest neighbors approach. Subsequently, all variables were rescaled to the [0,1] interval using z-score normalization. The two datasets were randomly split into training, calibration (validation), and test sets in an 80\%/10\%/10\% ratio via patient-level splits to ensure no patient overlap, under the assumption that the entire dataset is i.i.d. sampled from a common distribution.

The uncertainty score $\kappa_i$ quantifies the model's confidence in its treatment predictions. Once the uncertainty scores are computed for the training set, we train the uncertainty score predictor $\widehat \kappa$ on the calibration and test sets using standard machine learning models leveraging the full embedding feature space $\mathcal X$. Model performance is assessed by evaluating the average FDR across 500 independent experiments. Appendix~\ref{appendix: parameter} provides a sensitivity analysis of several hyperparameters, including the length of historical information, hidden dimension, and $\gamma$ in the loss function.

\subsection{Evaluation Metrics}
To evaluate the performance of SAFER and other baselines, we report MRR@3 and HR@3 for treatment ranking, as well as Micro AUC and Macro AUC for assessing predictive performance in the multiclass classification setting of DTR. Additionally, we report the counterfactual mortality rate reduction, which is a measure of how recommended treatments might have improved survival outcomes relative to real-world clinical actions~\citep{laine2020reducing, kusner2017counterfactual, valeri2016role}, to validate the effectiveness of the recommended treatment as part of an offline value estimation. The details for valid counterfactual mortality rate calculation are provided in Appendix \ref{appendix: mortality}.

\subsection{Baseline Methods}
For validating the effectiveness of SAFER, we selected several baseline methods for comparison. The baselines can be categorized as sequential embedding based and reinforcement learning based approaches. 

Sequential embedding methods include: \textbf{LSTM} \citep{hochreiter1997long}, widely used time-series prediction model;
\textbf{RETAIN} \citep{choi2016retain}, a two-level neural attention-based model that highlights key visit sequences for treatment prediction; \textbf{TAHDNet} \citep{su2022tahdnet}, a hierarchical temporal dependency network for dynamic treatment prediction. Reinforcement learning based methods include \textbf{Naive Baseline for RL}\citep{luo2024position}, a simple rule-based approach for benchmarking RL algorithms. \textbf{SRL-RNN} \citep{wang2018supervised}, which integrates supervised learning with RL using survival signals as rewards. \textbf{ACIL} \citep{wang2020adversarial}, an adversarial imitation learning approach that optimizes treatment by learning from both successful and failed trajectories. \textbf{ISL} \citep{jiang2023interpretable}, a prototype-based model ensuring treatment actions align with learned representations.
For fair comparison, all methods use the same data sources. Methods lacking native text processing capabilities incorporate BioClinicalBERT embeddings for clinical notes. Given our assumption that only surviving patients have fully reliable labels, we conduct primary evaluations on this subset, while assessing generalization to deceased patients through counterfactual mortality rate analysis across the entire dataset population.
\begin{table*}[!ht]
\centering
\caption{The performance of SAFER and its variants. ($p<0.05$)}
\label{tab: ablation}
\resizebox{\textwidth}{!}{  
\begin{tabular}{c|ccccc|ccccc}
\hline
         & \multicolumn{5}{c|}{MIMIC-III}             & \multicolumn{5}{c}{MIMIC-IV}               \\
Variants & MI-AUC & MA-AUC & HR@3 & MRR@3 & $\downarrow$ Mortality & MI-AUC & MA-AUC & HR@3 & MRR@3 & $\downarrow$ Mortality \\ \hline
SAFER-\textit{F}  &  0.9059      &    0.7140    &   0.7254   &   0.8067    &    0.2402       &    0.8853    &  0.6851      &  0.7199    &  0.7542     &   0.2315        \\
SAFER-\textit{N}  &   0.8655     &    0.7651    &  0.7523    &  0.7803     &    0.2951       &  0.8897      &  0.7841      & 0.7553     & 0.8029      &    0.3875       \\
SAFER-\textit{U}  &   0.9188     &   0.8237     &  0.8321    &   0.8769    &   0.2982        &  0.9231      &   0.8317     & 0.8451     &   0.8544    &     0.3765      \\
SAFER    & \textbf{0.9407} & \textbf{0.8672} & \textbf{0.8517} & \textbf{0.9017} & \textbf{0.3891}                               &   \textbf{0.9356}    &   \textbf{0.8755}     &   \textbf{0.8713}  &   \textbf{0.8698}    &           \textbf{0.4562}            \\ \hline
\end{tabular}
}
\end{table*}


\begin{figure*}[!ht]
    \centering
    \subfigure{
        \includegraphics[width=0.46\columnwidth]{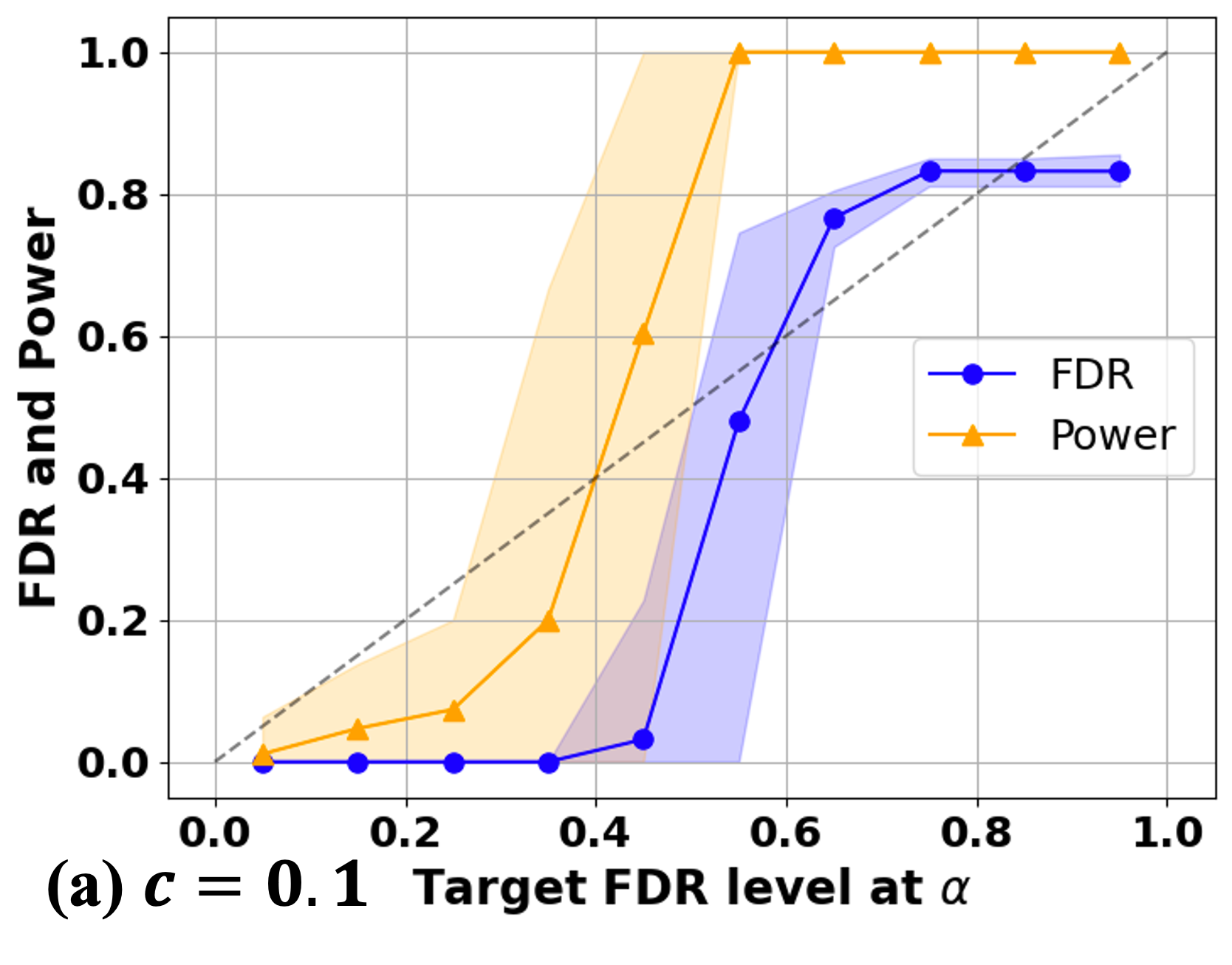}
        \label{fig:c1}
    }
    \subfigure{
        \includegraphics[width=0.46\columnwidth]{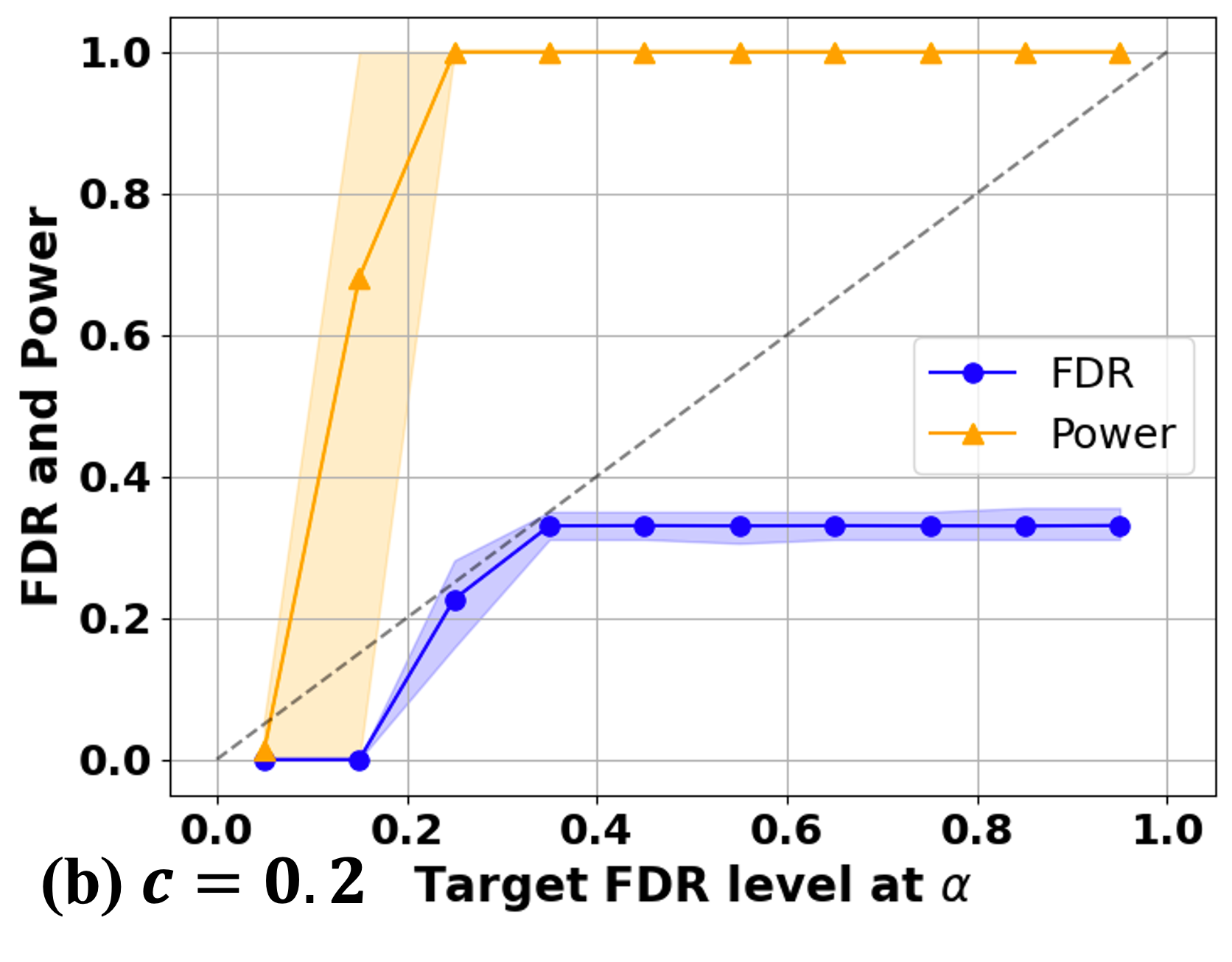}
        \label{fig:c2}
    }
        \subfigure{
        \includegraphics[width=0.46\columnwidth]{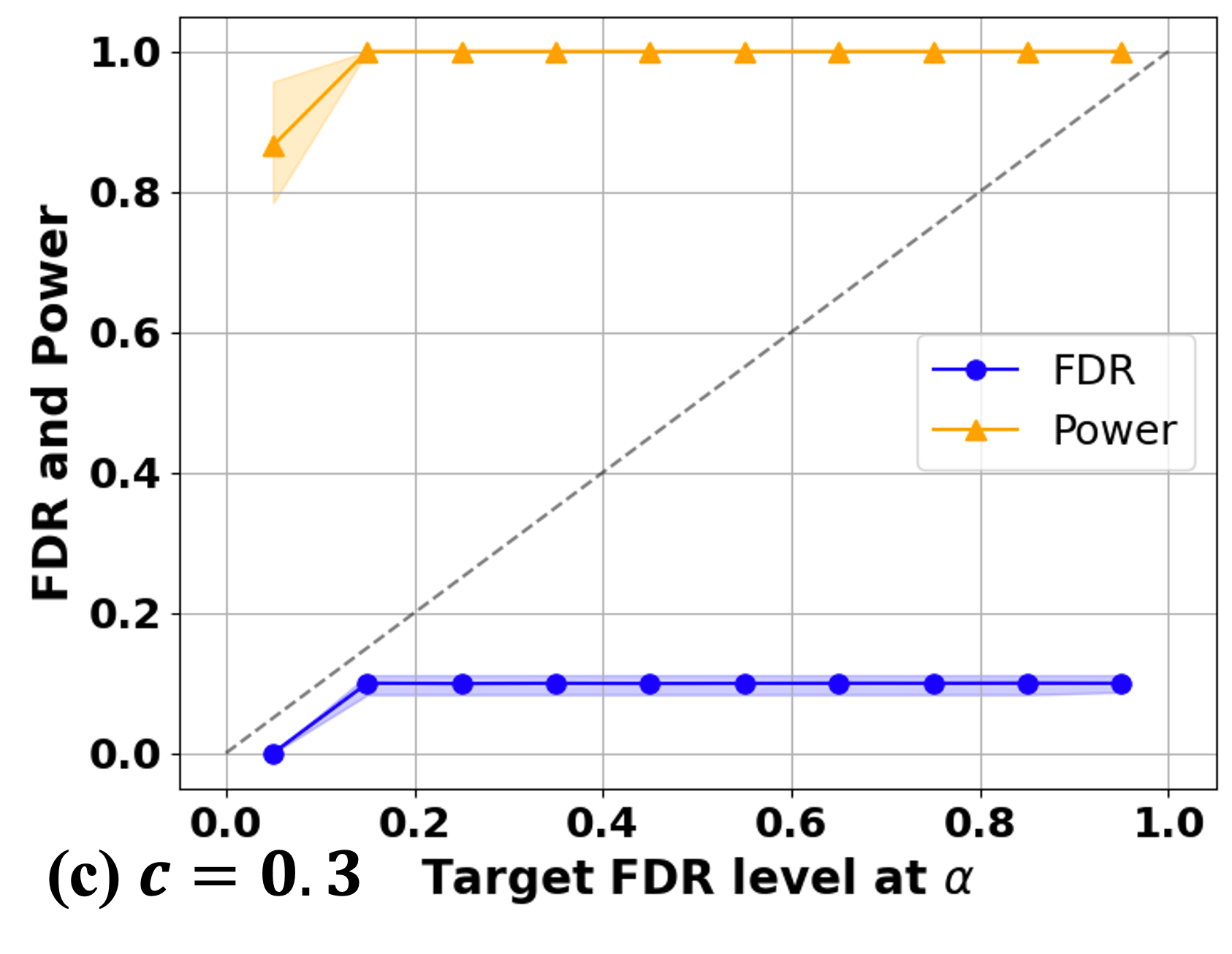}
        \label{fig:c3}
    }
        \subfigure{
        \includegraphics[width=0.46\columnwidth]{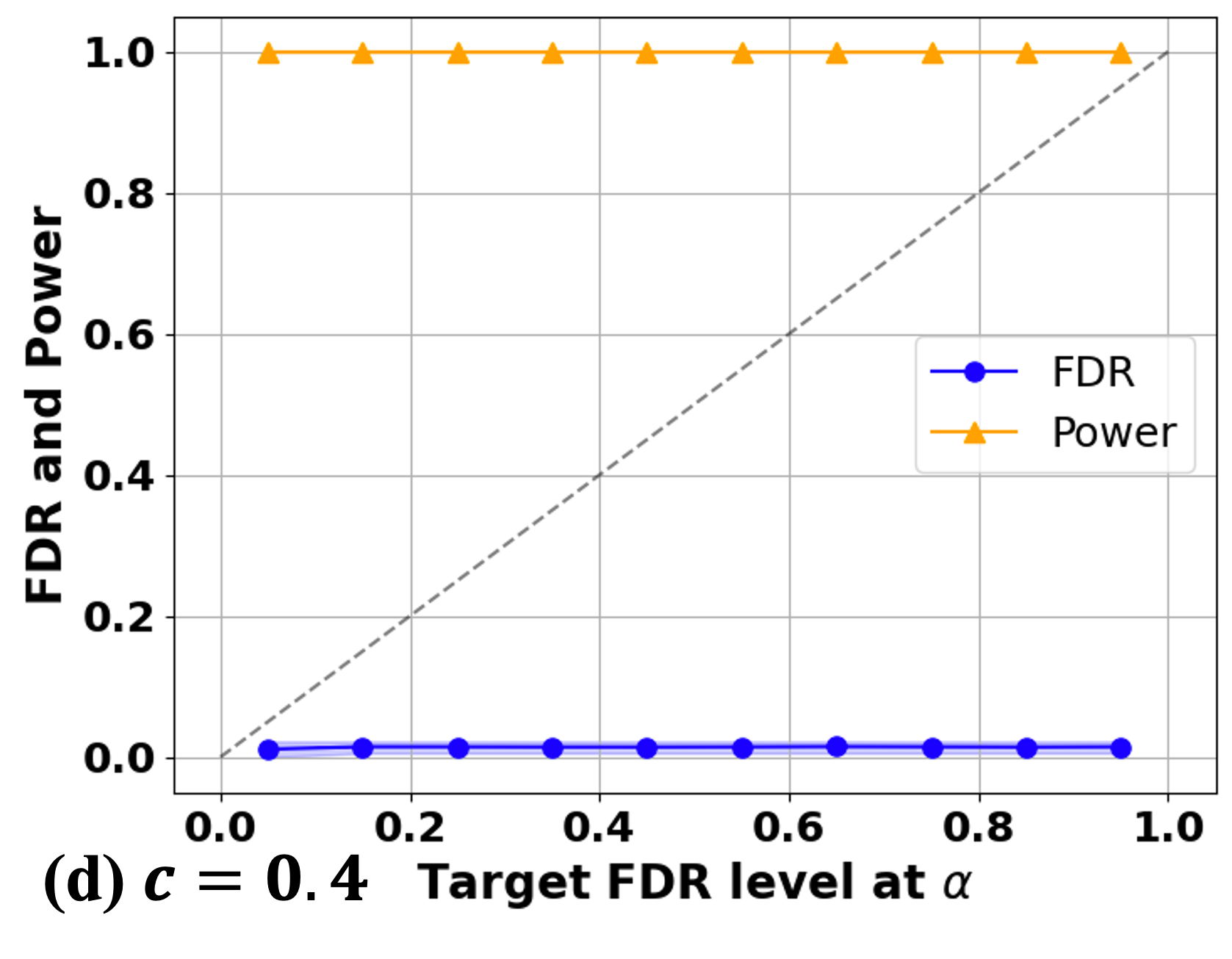}
        \label{fig:c4}
    }
    \caption{FDR and power curves across different target $\alpha$ level and varies uncertainty threshold $c$ on MIMIC-III with Ridge Regression.}
    \label{fig:fdr}
\end{figure*}

\begin{figure*}[!ht]
    \centering
    \subfigure{
        \includegraphics[width=0.46\columnwidth]{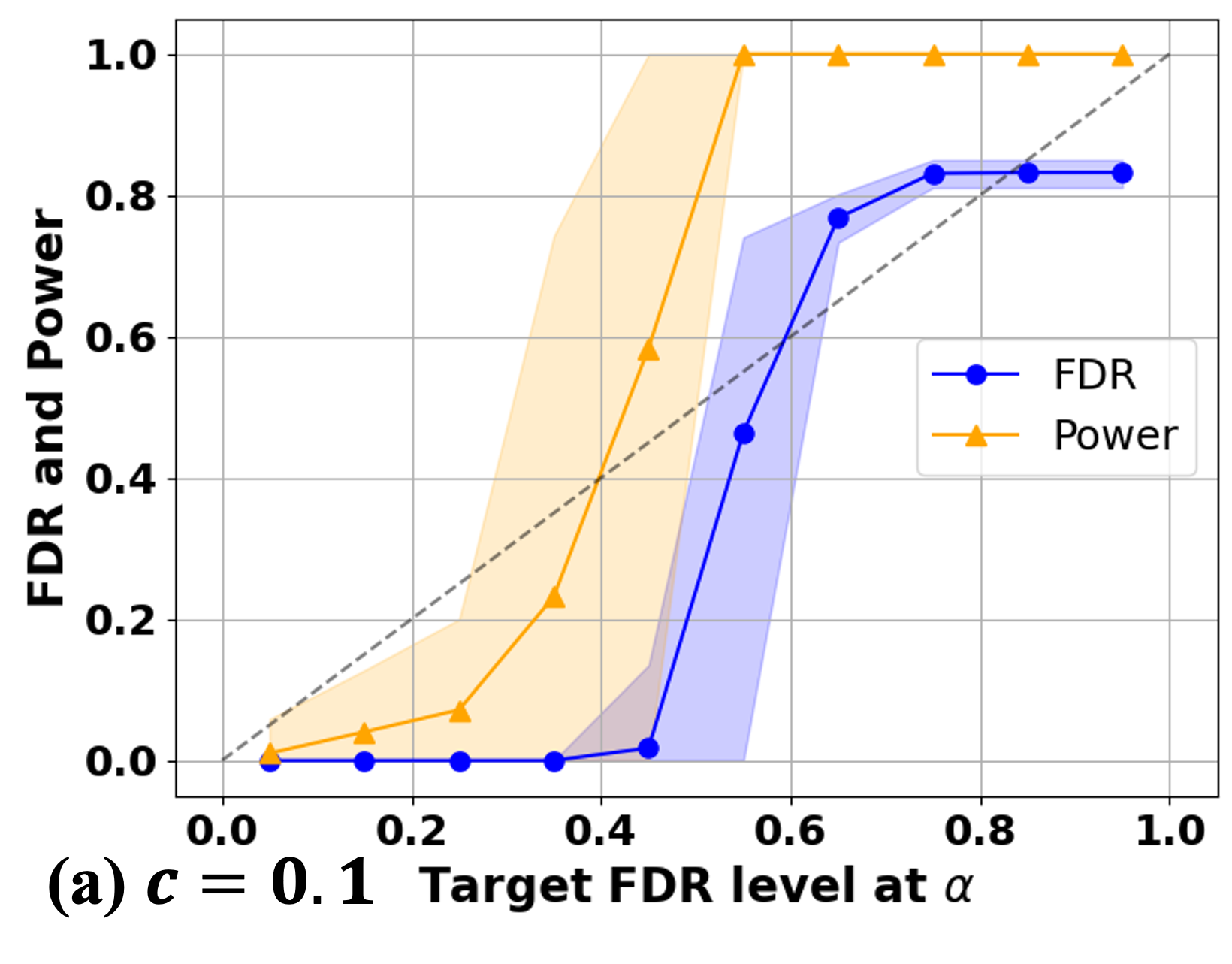}
        \label{fig:4c1}
    }
    \subfigure{
        \includegraphics[width=0.46\columnwidth]{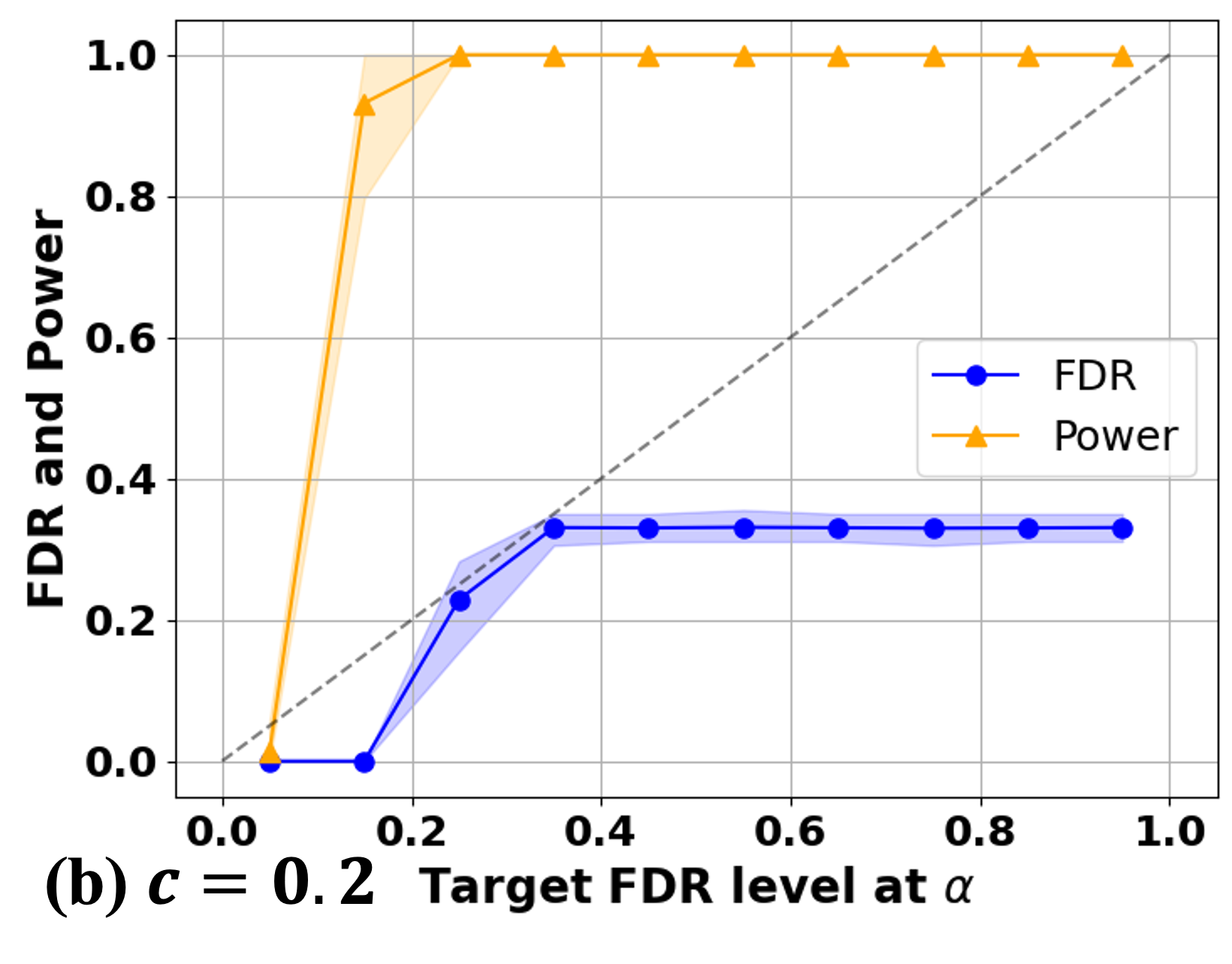}
        \label{fig:4c2}
    }
        \subfigure{
        \includegraphics[width=0.46\columnwidth]{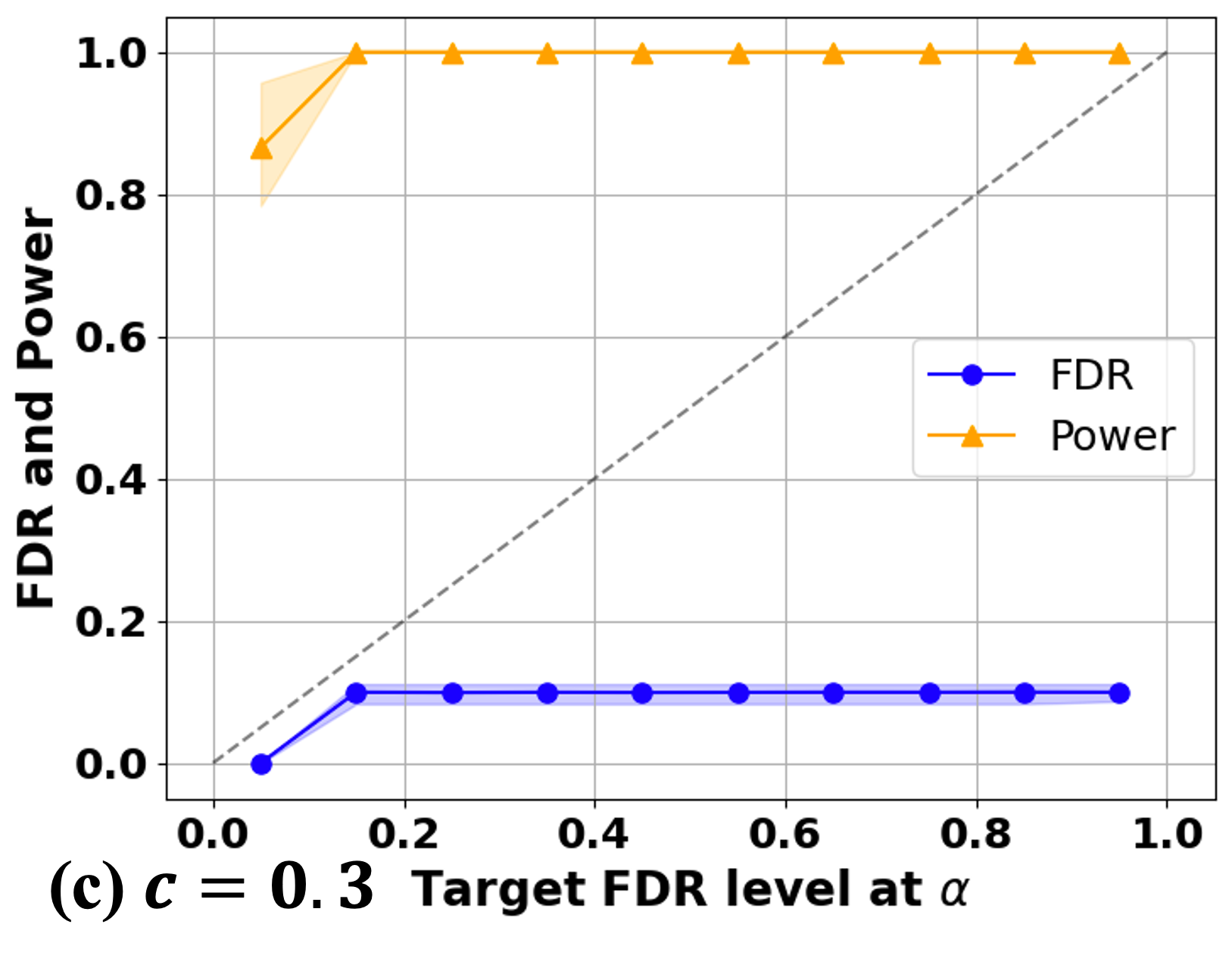}
        \label{fig:4c3}
    }
        \subfigure{
        \includegraphics[width=0.46\columnwidth]{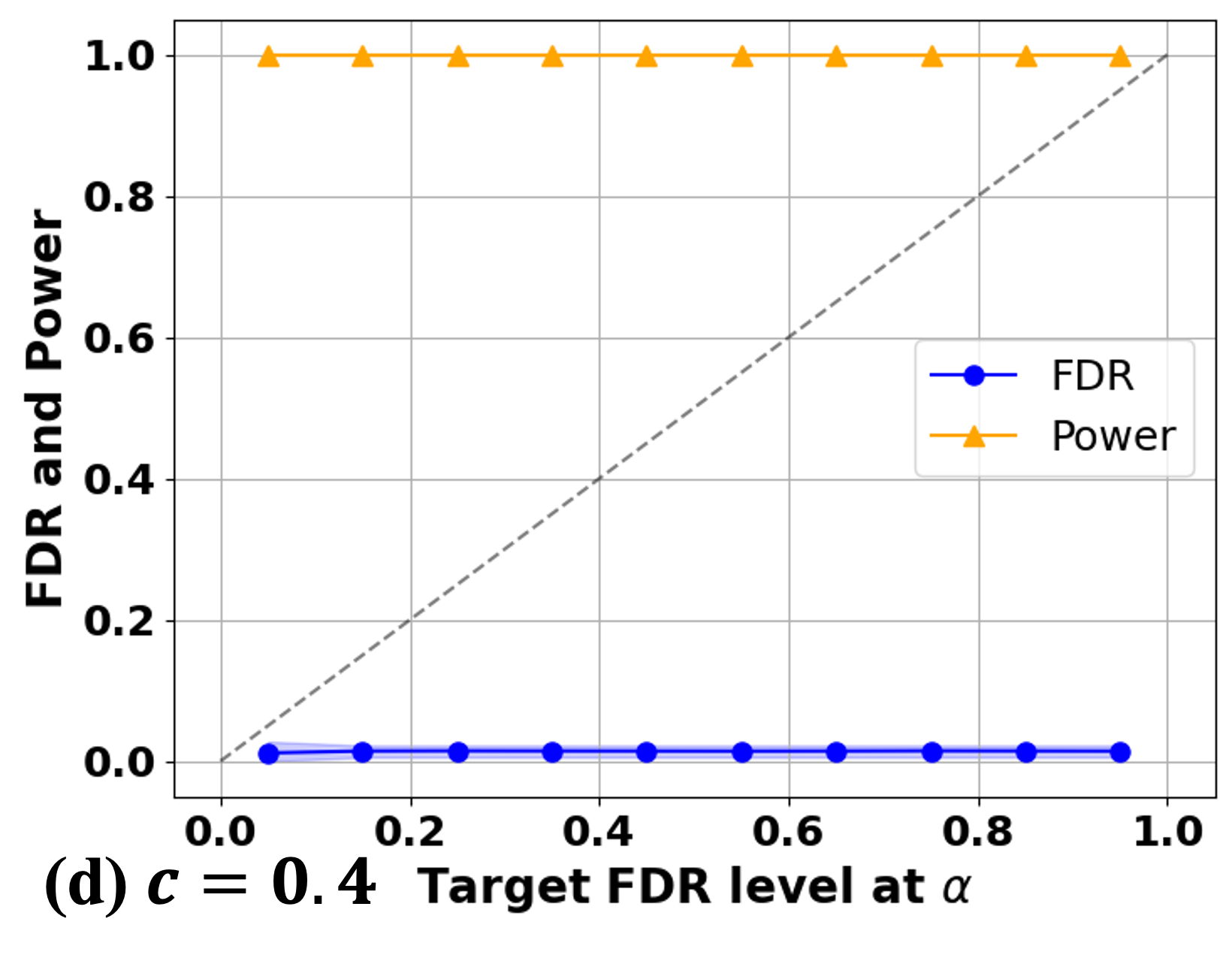}
        \label{fig:4c4}
    }
    \caption{FDR and power curves across different target $\alpha$ level and varies uncertainty threshold $c$ on MIMIC-IV with Ridge Regression.}
    \label{fig:fdr_4}
\end{figure*}

\subsection{Overall Performance}
Table~\ref{table:overrall} shows the overall performance of our proposed SAFER and baseline methods. Our analysis reveals several key observations as follows.

First, RL methods demonstrate consistent underperformance on classification benchmarks in general, especially with the macro-AUC metric on MIMIC-III revealing a 16.6\% deficit compared to sequence-based counterpart baselines on average. This probably stems from severe class imbalance of treatment label (Class 0 takes 61.2\% percent), where sparse disease-specific reward signals prove insufficient for distinguishing between classes. Consequently, RL agents fail to develop discriminative policies, resulting even worse performance compared with simple baselines in mortality prediction. While imitation learning approaches partially mitigate this issue, their performance remains suboptimal due to high patient heterogeneity in treatment responses.

Second, while sequence embedding-based methods excel in classification tasks, their effectiveness declines in ranking metrics compared to RL approaches. This is probably because embedding methods, while capturing global patterns, fail to distinguish fine-grained treatment efficacy differences. RL methods, by contrast, explicitly optimize treatment policies using reward signals tied to cure outcomes, enabling differentiated pattern learning across survival and death trajectories. Traditional embedding approaches indiscriminately encode all treatment sequences, introducing uncertainty due to distribution shifts between surviving and deceased patients, ultimately reducing performance in fine-grained ranking. However, our SAFER model, though embedding-based, effectively mitigates this issue.

Finally, our proposed method demonstrates superior performance across both classification and ranking metrics by effectively modeling EHR data sequences across multiple modalities and integrating label uncertainty from deceased patients. This results in more robust and generalizable predictions. Beyond these metrics, we further analyze the impact of our model on counterfactual mortality rate reduction. Overall, sequential embedding-based methods tend to underperform compared to RL methods in this aspect, as they indiscriminately encode all treatment sequences rather than learning distinct treatment strategies tailored to different patient outcomes. However, our risk-aware fine-tuning module significantly mitigates this limitation by incorporating uncertainty information and applying a penalty mechanism to focus training on reliable labels. As a result, SAFER achieves a substantial reduction in mortality rate, showcasing the effectiveness of our approach in recommending treatments with valuable real-world meanings.

\subsection{FDR Control}
\textbf{SAFER strictly controls the FDR.} Figure \ref{fig:fdr} and Figure~\ref{fig:fdr_4} present the realized FDR and power curves for SAFER on MIMIC-III \& IV respectively at various target FDR levels $\alpha\in\{0.05, 0.10, \dots, 0.95\}$ across different uncertainty thresholds $c$. Here, power is defined as $\mathbb{E}
  \Bigl[
    \tfrac{\sum_{j=1}^m \mathds{1}\{H_j \text{ is false},\, j \in \mathcal{S}\}}
          {\max \big\{1,\,\sum_{j=1}^m \mathds{1}\{H_j \text{ is false}\}\big\}}
  \Bigr]$. 
The uncertainty score predictor $\widehat \kappa$ is trained with all feature embeddings via Ridge Regression. Results trained from additional regression predictors are provided in Appendix \ref{appendix: fdr}. The results show that SAFER maintains strict control over the FDR at the specified level $\alpha$,  which stabilizes as $\alpha$ increases. Also, the power curve asymptotically converges to one with increasing $\alpha$,  indicating that SAFER selects all confident candidates without exceeding the FDR constraint.

\textbf{Choice of Uncertainty Score Threshold.} The performance of SAFER depends on the choice of the uncertainty threshold $c$, evaluated over $c\in\{0.1, 0.2, 0.3, 0.4\}$. As $c$ increases, both FDR and power stabilize more quickly, converging to a constant value and one respectively. For $c=0.1$,  the FDR curve remains steady, and power remains close to zero until $\alpha = 0.7$. In contrast, the FDR reaches a constant value of $0.1$ even at $\alpha = 0.1$ with power as high as $1$. SAFER employs KL-divergence to quantify model prediction uncertainty while there is no universally accepted threshold to determine when two distributions differ meaningfully. Therefore, as illustrated in Figure \ref{fig:fdr} and Figure~\ref{fig:fdr_4}, we provide a practical guideline for selecting the uncertainty score threshold in real-world applications.

\vspace{-2mm}
\subsection{Ablation Study}
We conduct ablation studies to validate the contribution of key components in SAFER, as demonstrated in Table \ref{tab: ablation}.

The first variant, \textbf{SAFER-\textit{F}}, removes the risk-aware fine-tuning process, directly using the initial prediction module for inference. This leads to notable performance degradation, especially in Macro-AUC and counterfactual mortality rate on MIMIC-IV, which has a high proportion of deceased patients. This confirms our hypothesis that accounting forlabel uncertainty is crucial for robust decision-making.

For the second variant \textbf{SAFER-\textit{N}}, we remove clinical notes, but relying only on structured EHR data. Performance deteriorates significantly across all metrics, demonstrating that structured data alone fails to capture essential contextual cues from clinical narratives, highlighting the importance of textual information in modeling temporal dependencies.

Finally, \textbf{SAFER-\textit{U}} removes the fine-tuning step and mitigates uncertainty by training only on surviving patients. Even within the survival subset, its performance remains inferior to SAFER, with a marked decline in counterfactual mortality rate. These findings emphasize the importance of incorporating deceased patients in training and validate our approach to handling the uncertainty they introduce.
\vspace{-3mm}

\section{Conclusion}

We have introduced SAFER, an end-to-end \textbf{multimodal} DTR framework that delivers \textbf{reliable treatment recommendations} with uncertainty quantification and theoretical guarantees. Compared with existing DTR frameworks, we provide a solution that may be more suitable to high-stakes scenarios, ensuring safer and more trustworthy decision-making. It outperforms SOTA baselines across multiple recommendation metrics while achieving the greatest reduction in mortality rates. These results underscore SAFER's potential for trustworthy and risk-aware decision support in real-world clinical settings. 

While this work primarily addresses inherent label uncertainty, real-world clinical data present broader challenges, including missing labels, latent confounders, and comorbidities. Tackling these complexities is essential for developing more generalizable and clinically grounded DTR frameworks. Future research can also build upon our approach to alternative error control notions beyond FDR, further improving the robustness and safety of treatment recommendations.

\section*{Acknowledgements}
This work was partially supported by the National Institutes of Health (NIH) under grant numbers U01TR003709, U24MH136069, RF1AG077820, R01AG073435, R56AG074604, R01LM013519, R01LM014344, R01DK128237, R21AI167418, and R21EY034179, and by the National Science Foundation (NSF) under grant numbers IIS-2006387 and IIS-2040799.

\section*{Impact Statement}

Dynamic treatment regimes (DTRs) play a crucial role in precision medicine by enabling personalized and adaptive treatment plans that have the potential to significantly improve patient outcomes. Although a wide range of DTR approaches show great promise through tailored treatment strategies based on patient responses, their application in high-stakes clinical settings necessitates rigorous and responsible implementation.

This work emphasizes the critical responsibility of the research community to ensure safety, ethical standards, and tangible benefits to patient care when advancing such technologies. SAFER addresses the inherent unreliability in clinical data by incorporating uncertainty quantification and mitigating prediction uncertainty, all while providing statistical guarantees. By controlling the false discovery rate in treatment recommendations, our approach safeguards patient trust and ensures that advancements in DTR do not come at the expense of patient safety or ethical integrity.

\clearpage

\bibliography{ehr}
\bibliographystyle{icml2025}

\newpage
\appendix
\onecolumn
\section{Technical Proofs}

\subsection{Lower-Bound Proof for Theorem~\ref{thm:uncertainty}.}
\label{appendix: thm4}
\emph{Restate of Theorem \ref{thm:uncertainty}.} We have two latent‐representation distributions
$P^{-}(h)$ and $P^{+}(h)$ over the same latent space $\mathcal{H}$, corresponding to
\emph{deceased} and \emph{surviving} patients, respectively. Let $h^- \sim P^{-}(h)$ and
$h^+ \sim P^{+}(h)$. We define the \emph{predictive uncertainty} at $h$ as
\[
  \kappa(h)
  \;=\;
  D_{\mathrm{KL}}
  \Bigl(
    p_{\theta}(y \mid h)
    \,\Big\|\,
    p_{\phi}(y \mid h)
  \Bigr),
\]
where $p_{\theta}$ is the ``teacher'' module’s distribution and $p_{\phi}$ is the ``student''
module’s distribution (trained with risk‐aware fine‐tuning). The theorem claims that under:

\begin{enumerate}
    \item $D_{\mathrm{KL}}\bigl(P^{-}(h)\,\|\,P^{+}(h)\bigr) > 0$, i.e.\ $P^{-} \neq P^{+}$,
    \item $f_{\phi}$ is $L$‐Lipschitz continuous over $\mathcal{H}$ (so $p_{\phi}(y \mid h)$ 
          cannot \emph{sharply} change as $h$ varies),
\end{enumerate}
we have
\[
  \mathbb{E}_{h \sim P^-}[\kappa(h)]
  \;>\;
  \mathbb{E}_{h \sim P^+}[\kappa(h)]
  \quad
  \text{with a strictly positive lower bound.}
\]

\begin{proof}[Proof of Theorem~\ref{thm:uncertainty}.]
Since $D_{\mathrm{KL}}\bigl(P^-\,\|\,P^+\bigr) > 0$, there must be at least one measurable subset 
$\mathcal{G}\subseteq\mathcal{H}$ on which $P^{-}$ places strictly greater mass than $P^{+}$ 
(or vice versa). Concretely, there exists $\epsilon>0$ such that
\[
  P^{-}(\mathcal{G}) - P^{+}(\mathcal{G}) \;\ge\; \epsilon.
\]
Since if no such region existed, $P^{-}$ would equal $P^{+}$ almost everywhere, 
contradicting $D_{\mathrm{KL}}(P^-\|P^+)>0$.

Also as we have
\[
  \kappa(h)
  \;=\;
  D_{\mathrm{KL}}\Bigl(p_{\theta}(y\mid h)\,\|\,p_{\phi}(y\mid h)\Bigr).
\]
If $p_{\phi}$ is $L$-Lipschitz in $h$, then as $h$ varies within a small neighborhood,
the entire predicted distribution $p_{\phi}(y\mid h)$ cannot drastically jump to match
$p_{\theta}(y\mid h)$ perfectly, unless the underlying latent distributions $P^{-},P^{+}$ are
aligned. Since $P^-\neq P^+$, there is a region $\mathcal{G}$ in latent space where $p_{\phi}$
cannot “annihilate” the mismatch in $p_{\theta}$. That is on a measurable set $\mathcal G \subset \mathcal{H}$, the teacher predictions differ from the student by at least a fixed amount:
\[
\left\| p_{\theta}(\cdot \mid h) - p_{\phi}(\cdot \mid h) \right\|_1 \geq \delta \quad \text{for all } h \in\mathcal G, \tag{2}
\]
with constants $\delta > 0$. Hence, $\kappa(h)$ is bounded away from 
zero on some portion of $\mathcal{G}$ with nontrivial measure under $P^{-}$,
\[
\mathbb{E}_{h \sim P^-}[\kappa(h)]
\;=\;
\int_{\mathcal{H}} \kappa(h) \, dP^-(h)
\;\geq\;
\int_{\mathcal G} \kappa(h) \, dP^-(h)
\;{\geq}\;
\frac{1}{2} \delta^2 P^-(\mathcal G). \tag{6}
\]
Where the second inequality follows from Pinsker's inequality which stated in many standard results (\emph{e.g.}, 
\citep{sriperumbudur2009integral,tsybakov2009nonparametric}).

Split the survivor expectation into the same region $\mathcal G$ and its complement:
\[
\mathbb{E}_{h \sim P^+}[\kappa(h)]
\;=\;
\underbrace{\int_{\mathcal G} \kappa(h) \, dP^+(h)}_{\text{(a)}}
\;+\;
\underbrace{\int_{\mathcal{H} \setminus \mathcal G} \kappa(h) \, dP^+(h)}_{\text{(b)}}.
\]

For the first term (a), we can control $\kappa$ by Lipschitzness. Fix $h \in \mathcal G$ and pick $h'$ with $P^+$-density such that $d(h, h') \leq r$ for a radius $r > 0$ (possible because $\mathrm{supp}(P^+) = \mathcal{H}$ in practice). Applying assumption 1 and again Pinsker's inequality,
\[
    \kappa(h) \;\leq\; D_{\mathrm{KL}}\left( p_{\theta} \,\|\, p_{\phi}(\cdot \mid h') \right) + C L^2 r^2,
\]
where $C$ is an absolute constant. Choosing $r$ small makes this term negligible compared with $\delta$. Denote the resulting bound by $\varepsilon_1$. For the second term (b), as the global survivor risk can be tuned at most $\varepsilon$, therefore (b) $\leq \varepsilon$.

Collecting the two parts we have
\[
\mathbb{E}_{h \sim P^+}[\kappa(h)] \;\leq\; \varepsilon + \varepsilon_1.
\]

Subtract this from the lower bound of $\mathbb{E}_{h \sim P^-}[\kappa(h)]$:
\begin{align}
\mathbb{E}_{h \sim P^-}[\kappa(h)] - \mathbb{E}_{h \sim P^+}[\kappa(h)]
&\;\geq\;
\frac{1}{2} \delta^2 P^-(\mathcal G) - (\varepsilon + \varepsilon_1) \notag\\
&\;=\;
\underbrace{\left( \frac{1}{2} \delta^2 \pi - (\varepsilon + \varepsilon_1) \right)}_{=:c}. \notag
\end{align}

Where $\pi := P^{-}(\mathcal G) - P^{+}(\mathcal G) > 0$ from the first assumption and $\delta > 0$, we can easily pick training and regularisation so that $\varepsilon, \varepsilon_1$ are small enough, hence $c > 0$.

Putting it together,
\[\displaystyle \mathbb{E}_{h \sim P^-}[\kappa(h)] - \mathbb{E}_{h \sim P^+}[\kappa(h)] \;\geq\; c > 0\]
with an explicit constant $c = \frac{1}{2} \delta^2 \pi - \varepsilon - \varepsilon_1.$ This establishes that deceased-patient latents (drawn from $P^{-}$) systematically lead to
higher KL‐based uncertainty $\kappa(\cdot)$ than do survivor latents from $P^{+}$, giving a
strict positive gap on average. Hence the theorem’s statement follows.


\end{proof}

\begin{remark}
By forcing $p_{\phi}$ to remain continuous with respect to $h$, we guarantee that if latent 
embeddings of deceased patients differ significantly from those of survivors, the student’s 
predicted distributions cannot “collapse” to match the teacher’s everywhere in $\mathcal{H}$. 
Hence, the \emph{KL uncertainty} $\kappa(h)$ for $h^- \sim P^-$ stays measurably larger 
on average than for $h^+ \sim P^+$, ensuring 
$\mathbb{E}_{P^-}[\kappa] > \mathbb{E}_{P^+}[\kappa]$ by at least a positive margin.
\end{remark}

\begin{remark}
$\delta$ can be estimated on a validation split by the empirical minimum teacher--student $\ell_1$ gap over high-mortality clusters; $\pi$ follows from any two-sample test on the latent representations; $\varepsilon$ is the held-out risk of the student model on survivors.
\end{remark}

\begin{remark}{Tighter bounds.}
One may replace Pinsker's inequality by the Bretagnolle--Huber inequality or by Csiszár--Kullback--Pinsker to sharpen $c$. The qualitative conclusion of strict positivity remains unchanged.
\end{remark}

\begin{remark}
If the student is \textit{Bayes-optimal} for $P^+$ (in the limit of infinite positive data) and the teacher is Bayes-optimal for the mixture,then $\mathbb{E}_{P^+}[\kappa] = 0$ exactly, and the proof simplifies to analysing $P^-$ only, yielding $c = \frac{1}{2} \delta^2 P^-(\mathcal G).$
\end{remark}

\subsection{FDR Control Proof of Theorem \ref{thm:fdr_control}}
\label{appendix: thm}

\emph{Restate of Theorem \ref{thm:fdr_control}.} Let $\kappa:\mathcal{X}\to \mathbb R$ be an uncertainty function, and $\sup_{x}\,\kappa(x)\,\le\,M$ for some $M\ge 0$.  We have $n$ i.i.d.\ 
calibration samples $\{(\mathbf{x}_i,y_i)\}_{i=1}^n$ and $m$ i.i.d.\ test inputs 
$\{\mathbf{x}_{n+j}\}_{j=1}^m$, all mutually independent in the sense that \emph{any} subset 
excluding index $j$ is jointly independent of the data at index $j$.  For each test point $j$ we 
define a null hypothesis
\[
  H_j:\;\kappa_{n+j} \;\ge\; c,
\]
and the conformal p-value $p_j$ as in \eqref{p-value}, namely
\begin{align*}
    p_j 
  &=
  \frac{
    \sum_{i=1}^n 
      \mathds{1}\!\Bigl\{
        \widehat \kappa_i < \widehat \kappa_{n+j}, 
        \;\kappa_i\ge c
      \Bigr\}
    \;+\; 1
  }{
    n \;+\; 1
  }
  \;+\;
  \frac{
    U_j \cdot
    \Bigl(
      1 
      \;+\; 
      \sum_{i=1}^n 
        \mathds{1}\!\Bigl\{
          \widehat \kappa_i = \widehat \kappa_{n+j}, 
          \;\kappa_i\ge c
        \Bigr\}
    \Bigr)
  }{
    n \;+\; 1
  },
\end{align*}
where $U_j \sim \mathrm{Unif}(0,1)$ are i.i.d.\ tie-breaking variables.  Let the 
\emph{Benjamini--Hochberg (BH)} procedure at level $\alpha\in(0,1)$ be applied to 
$\{p_j\}_{j=1}^m$, producing a selection (rejection) set $\mathcal{S}$.  Denote $R=|\mathcal{S}|$ 
and 
\[
  V
  \;=\;
  \sum_{j=1}^m 
    \mathds{1}\Bigl\{\,
      j\in \mathcal{S}, 
      H_j \text{ is true}
    \Bigr\},
\]
the number of false rejections.  Then under the above assumptions, the false discovery rate (FDR) is
\[
  \mathrm{FDR}
  \;=\;
  \mathbb{E}\biggl[
    \frac{V}{\max\{1,\,R\}}
  \biggr] \le\alpha.
\]

\begin{proof}[Proof of Theorem~\ref{thm:fdr_control}]

We define the \emph{nonconformity score} $J$ as
\[
  J(\mathbf{x},y)
  \;=\;
  \kappa(\mathbf{x})
  \;+\; 2M \cdot\;\mathds{1}\!\{\;y \ge c\}.
\]
Thus, if $y<c$ (i.e., if the label is ``below'' the critical threshold $c$), $J(\mathbf x,y) = \kappa(x)$. On the
other hand, if $y\ge c$, then $J(\mathbf{x},y) = \kappa(\mathbf{x})+2M$. The nonconformity score $J(\mathbf x, y)$ preserves the monotonicity property in terms of $y$. Thus if we define $J_i = J(\mathbf x_i, y_i),\, \widehat J_i = J(\mathbf x_i, c)$ for every $i\in[n+m]$, the conformal p-value defined in Equation \eqref{p-value} converts to
\begin{align*}
    p_j 
  \;=\;
  \frac{
  \sum_{i=1}^n\mathds{1}\Bigl\{
     J_i < \widehat J_{n+j}
  \Bigr\}
  }{
    n \;+\; 1
  }
  +\frac{U_j\cdot
    (1 \;+\; 
    \sum_{i=1}^n 
      \mathds{1}\!\Bigl\{
    J_i = \widehat J_{n+j}
      \Bigr\})}
      {n \;+\; 1}, 
\end{align*}
as defined in \citet{jin2023selection}. To ensure the completeness of the proof, we adapt major proof procedures from Theorem 3 in \citet{jin2023selection} as following.

Using $J(\cdot,\cdot)$ in a standard conformal scheme 
\citep{vovk2005algorithmic,lei2018distribution}, we obtain p-values 
$p_1,\dots,p_m$ of the Equation \eqref{p-value} (including $U_j$ for tie-breaking).  By 
\emph{exchangeability} of calibration and test data, plus the monotonicity of $J$, each $p_j$ is 
``selectively super-uniform'' with respect to its null $H_j$ \citep{jin2023selection}; that is,
for every $\alpha\in[0,1]$,
\[
  \mathbb{P}\!\Bigl[
    (j\in \mathcal{S}) \,\wedge\,(p_j \le \alpha)
  \Bigr]
  \;\le\;\alpha.
\]
Roughly, this property ensures that $p_j$ behaves conservatively if $H_j$ is true. Moreover, one typically invokes a PRDS condition (Positive Regression Dependence on a Subset)
or mutual independence across $\{p_j\}_{j=1}^m$ to ensure the BH procedure can be applied
with classical guarantees \citep{benjamini2001control,efron2012large}.

Then under i.i.d.\ sampling $\{(\mathbf{x}_i,y_i)\}_{i=1}^n$ plus $\{\mathbf{x}_{n+j}\}_{j=1}^m$
and monotonic $J$, the random variables $p_1,\dots,p_m$ exhibit either independence or positive
correlation that meets PRDS assumptions \citep[see, e.g.,][]{bates2023testing,jin2023selection}.
Hence, each true null label $j$ effectively satisfies $p_j \sim \text{selective super-uniform}$
with respect to $H_j$.

Finally, we show $\mathrm{FDR}\le \alpha$ once BH is applied to the p-values $(p_1,\ldots,p_m)$
at level $\alpha$.  Let $\mathcal{S}=\{\,j: p_j \le p_{(k)}\}$ denote the BH rejection set, where
\[
  k 
  \;=\;
  \max\Bigl\{\,r : p_{(r)} \le \tfrac{\alpha\,r}{m}\Bigr\}.
\]
Define indicator random variables $R_j=\mathds{1}\{j \in \mathcal{S}\}$ and $T_j = \mathds{1}\{H_j
\text{ is true}\}$.  Then
\[
  \mathrm{FDR}
  \;=\;
  \mathbb{E}
  \Bigl[
    \frac{\sum_{j=1}^m T_j\,R_j}
         {\max\{1,\sum_{j=1}^m R_j\}}
  \Bigr]
  \;=\;
  \mathbb{E}
  \Bigl[
    \frac{1}{\max\{1,\sum_j R_j\}}
    \sum_{j=1}^m 
      T_j\,R_j
  \Bigr].
\]
From the property of BH under PRDS super-uniform p-values \citep{benjamini1995controlling,
benjamini2001control, bates2023testing, jin2023selection}, we have
\[
  \mathbb{E}\bigl[
     T_j\,R_j
  \bigr]
  \;\le\;
  \alpha \,
  \mathbb{E}\bigl[
     R_j
  \bigr],
\]
summing over $j$ and employing the usual BH bounding technique, it follows that
\[
  \mathrm{FDR}
  \;\;=\;\;
  \mathbb{E}
  \Bigl[
    \tfrac{\sum_{j=1}^m T_j\,R_j}{\max\{1,\sum_j R_j\}}
  \Bigr]
  \;\;\le\;\;
  \alpha.
\]
In short, the fraction of wrongly rejected true nulls among all rejections remains at or below
$\alpha$ in expectation.  This completes the proof.

\end{proof}

\begin{remark}
    We assume the full dataset is i.i.d., with calibration and test sets generated via random, non-overlapping patient-level splits, satisfying the conditions of our theoretical guarantees. This i.i.d. assumption can be relaxed to exchangeability, followed from Theorem 6 of \citet{jin2023selection}. While real-world data may exhibit distribution shifts, recent advances in weighted conformal inference \citep{jin2023model} offer promising avenues to address covariate shift in such settings.
\end{remark}

\section{Counterfactual Mortality Calculation}
\label{appendix: mortality}

Suppose $M_i$ represents the mortality event, $M_i(y)$ refers to the potential outcome under the treatment arm $Y_i=y$, and $
\mathbf X_i\in \mathbb R^{T\times d_k}$ is the measured counfounders for the $i$-th patient. To estimate the effectiveness of our recommended treatment plans, we evaluate the model with the reduction in counterfactual mortality rate, we train an additional LSTM-based neural network as a counterfactual mortality prediction model. During training, this model takes patient features and ground-truth treatments as input and is optimized using Binary Cross Entropy (BCE) loss to predict the probability of death as a binary classification task. During inference, the trained model estimates a counterfactual mortality rate by applying the model to patient features combined with the recommended treatment (i.e., the treatment predicted by our DTR model). The decrease in mortality rate is then defined as the difference between this estimated counterfactual mortality and the actual observed mortality rate under standard clinical practice.  This analysis is based on the following assumptions regarding potential outcomes:
\begin{enumerate}
    \item[(B1)] \textbf{No interference.} $M_i(Y_i)$ depends only on $Y_i$. \label{ref:interference}
    \item[(B2)] \textbf{No hidden variability.}
    Each unit has unique $M_i(y)$. \label{ref: hiddern}
    \item[(B3)] \textbf{Ignorability / No Unmeasured Confounding.} 
    Conditioned on the measured covariates $\mathbf{X}_i$, the potential outcomes
    $M_i(y)$ are independent of the assigned treatment. That is,
    \begin{equation*}
        \bigl\{M_i(y)\bigr\}_{y \in \mathcal{Y}}
      \;\perp\;\;
      Y_i 
      \;\mid \;
      \mathbf{X}_i.
    \end{equation*} 
    \label{ref:B3}
    \item[(B4)] \textbf{Positivity (Overlap).} 
    Every treatment arm $y$ has a nonzero probability of being assigned
    given $\mathbf{X}_i$, so $P(Y_i = y \mid \mathbf{X}_i) > 0$ for all $y \in \mathcal Y$.
    \label{ref:B4}
\end{enumerate}
We maintain the classic stable unit treatment value (SUTVA) assumption \citep{rubin1980randomization} that no interference between units in (B1) and no hidden variations of treatments occur in (B2), If patient $i$ actually receives treatment $y$, then the observed mortality $M_i$ coincides with the potential outcome $M_i(y)$, which allows us to assume that $M =\sum_{y}YM(y)$ almost surely.  (B3) is a standard assumption on the ignorability of treatment assignment \citep{shi2023data}. Together, these assumptions allow us to view $M_i(y)$ as a well-defined counterfactual,
enabling estimation of counterfactual mortality under different model-predicted treatments. Under these assumptions, we have $\mu_y^{\star}(\mathbf X)=\mathbb E\{M\mid Y=y,\mathbf X\}$, where $\mu_y^{\star}(\mathbf X)=\mathbb E\{M(y)\mid\mathbf X\}$.

\clearpage

\section{Experiment Details}

\subsection{Dataset Details}
\textbf{Dataset Statistical} Table~\ref{stats} shows the statistical details of the two cohort datasets we use.
\begin{table}[H]
\centering
\begin{tabular}{ccccc}
\hline
Dataset   & \#Survival & \#Deceased & \#Avg Len & \#Avg Notes \\ \hline
MIMIC-III &    3118        &   427         &   11.75        &     40.65        \\
MIMIC-IV  &    19450        &    3786        &    11.50       &     29.08        \\ \hline
\end{tabular}
\caption{The dataset statistics.}
\label{stats}
\end{table}

\textbf{Attribute} Table~\ref{Attributes} presents the attributes used in our experiments.

\begin{table}[ht]
\small
\centering
\begin{tabular}{c|c}
\hline
Attribute Type                 & Attribute Name                                                                                                          \\ \hline
Demographics                   & Gender, Age, Re\_admission, Weight\_kg, Height\_cm                                                                      \\ \hline
\multirow{5}{*}{Vital Signals} & GCS, RASS, HR, SysBP, MeanBP, DiaBP, RR, Temp\_C, CVP, PAPsys, PAPmean, PAPdia, CI,  SVR                               \\
                               & FiO2\_1, O2flow, PEEP, TidalVolume, MinuteVentil, PAWmean, PAWpeak, PAWplateau, Potassium, Sodium, Chloride, \\
                               & Glucose, BUN, Creatinine, Magnesium, Calcium, , SGOT, SGPT, Total\_bili, Direct\_bili, Total\_protein, Albumin,         \\
                               & Troponin, CRP, Hb, Ht, RBC\_count, WBC\_count, Platelets\_count, PTT, PT, ACT, INR, Arterial\_pH, paO2,  paCO2,               \\
                               & Arterial\_BE, Arterial\_lactate, HCO3, ETCO2, SvO2, mechvent, extubated, Shock\_Index, PaO2\_FiO2, SOFA, SIRS    \\ \hline
\end{tabular}
\caption{The attribute used in the experiments.}
\label{Attributes}
\end{table}

\textbf{The label frequency on suvivor subset.} Since we evaluate most metrics on the survivor subset, we also report the label frequency on the survivor subset as Figure~\ref{fig:fs_label} shows. In our experiments, we follow established protocols from prior sepsis treatment studies (e.g., Komorowski et al., 2018) by discretizing the intravenous fluid and vasopressor dosages into 5 bins each. Specifically, any absence of medication constitutes the zero bin, while the remaining dosages are partitioned into four additional bins according to empirical quantiles. This results in a 5 × 5 grid, forming 25 discrete treatment classes, where each class corresponds to a unique combination of fluid and vasopressor dosage levels (i.e., (fluid bin) × (vasopressor bin)). The distribution of these treatment classes are visualized in Figure 2 of the manuscript.

\begin{figure*}[!ht]
    \centering
    \subfigure[MIMIC-III]{
        \includegraphics[width=0.25\columnwidth]{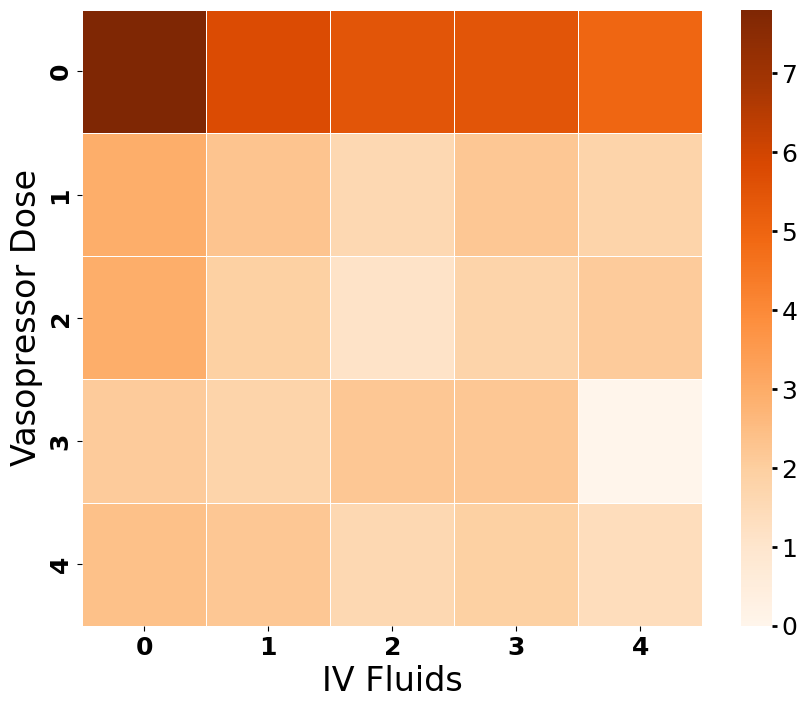}
    }
    \subfigure[MIMIC-IV]{
        \includegraphics[width=0.25\columnwidth]{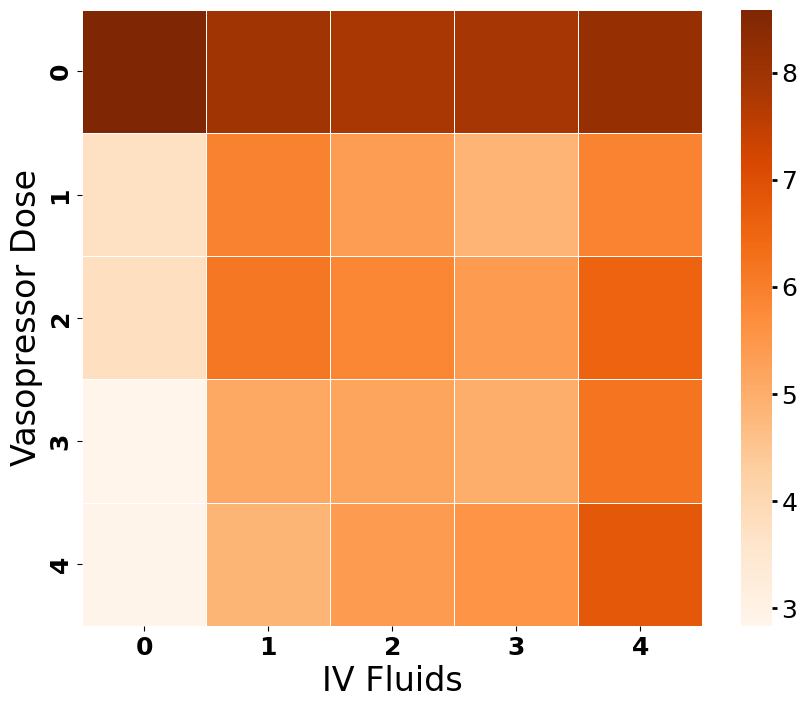}
        \label{fig:f2}
    }
    \caption{Comparative visualization of the treatment frequency matrix in $\log$ scale from the survivor subset of two datasets. Panel (A) represents MIMIC-III, while Panel (B) corresponds to MIMIC-IV.}
    \label{fig:fs_label}
\end{figure*}

\subsection{FDR Control}
\label{appendix: fdr}
In this section, we present the FDR control results using Linear Regression on the MIMIC-III dataset (Figure~\ref{fig:fdr_3l}) and the MIMIC-IV dataset (Figures~\ref{fig:fdr_4l}), further validating the effectiveness of our FDR control mechanism.

\begin{figure*}[!ht]
    \centering
    \subfigure[$c=0.1$]{
        \includegraphics[width=0.23\columnwidth]{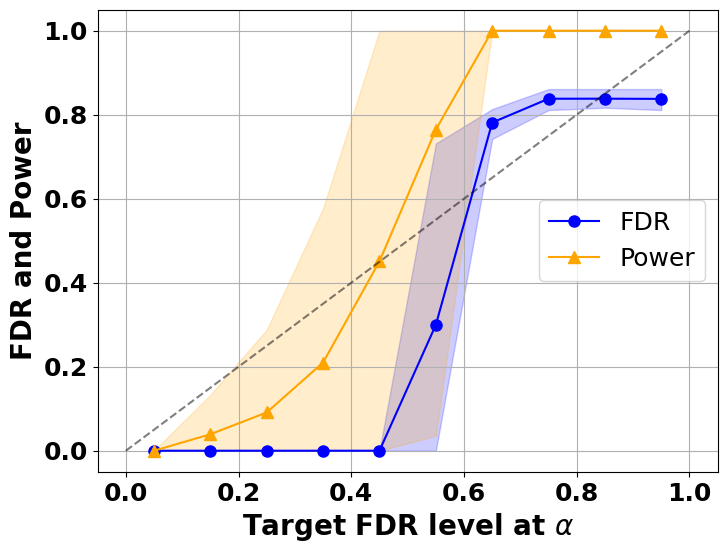}
        \label{fig:3lc1}
    }
    \subfigure[$c=0.2$]{
        \includegraphics[width=0.23\columnwidth]{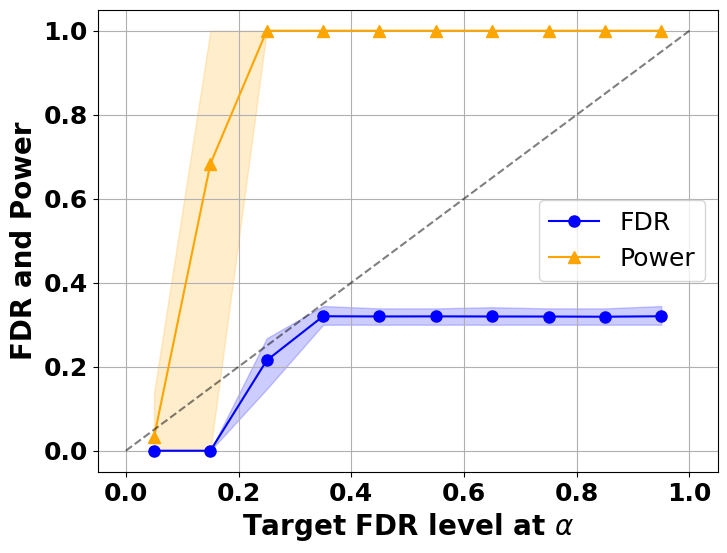}
        \label{fig:3lc2}
    }
        \subfigure[$c=0.3$]{
        \includegraphics[width=0.23\columnwidth]{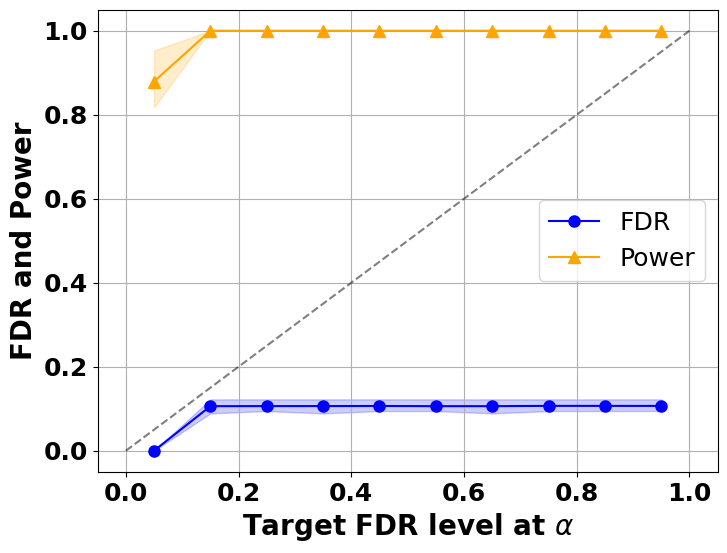}
        \label{fig:3lc3}
    }
        \subfigure[$c=0.4$]{
        \includegraphics[width=0.23\columnwidth]{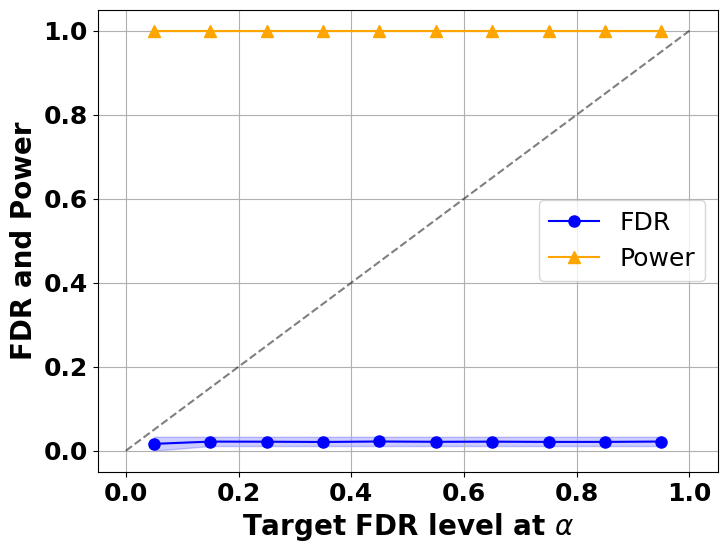}
        \label{fig:3lc4}
    }
    \caption{FDR and power curves across different target $\alpha$ level with Linear Regression on MIMIC-III.}
    \label{fig:fdr_3l}
\end{figure*}

\begin{figure*}[!ht]
    \centering
    \subfigure[$c=0.1$]{
        \includegraphics[width=0.23\columnwidth]{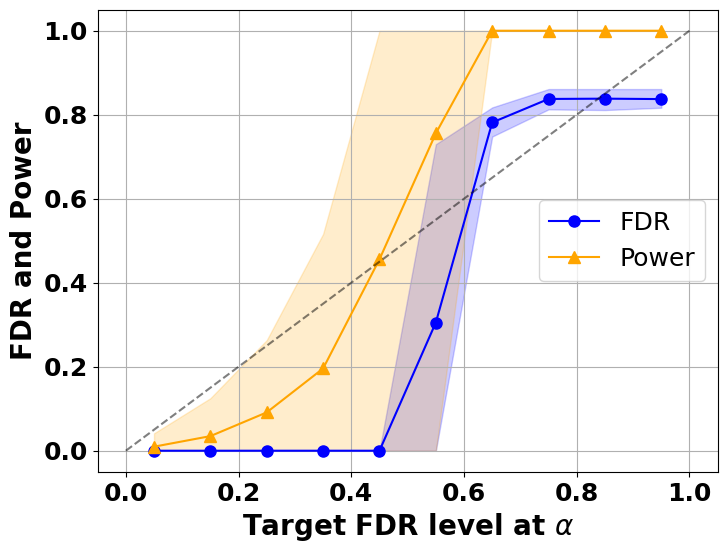}
        \label{fig:4lc1}
    }
    \subfigure[$c=0.2$]{
        \includegraphics[width=0.23\columnwidth]{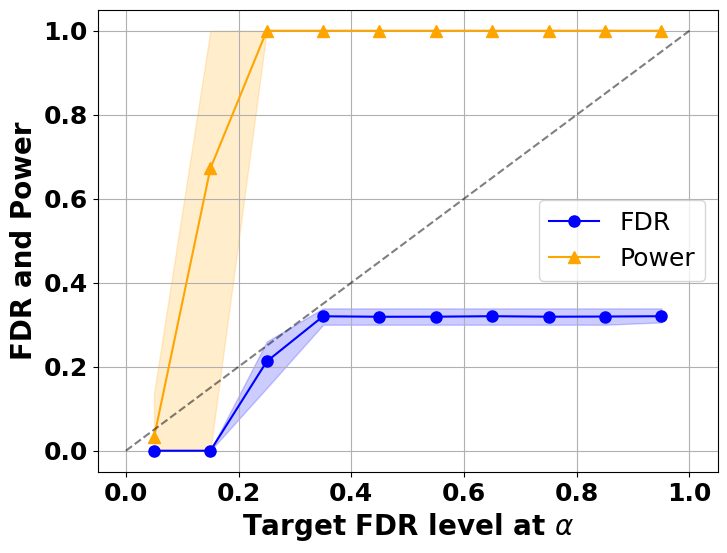}
        \label{fig:4lc2}
    }
        \subfigure[$c=0.3$]{
        \includegraphics[width=0.23\columnwidth]{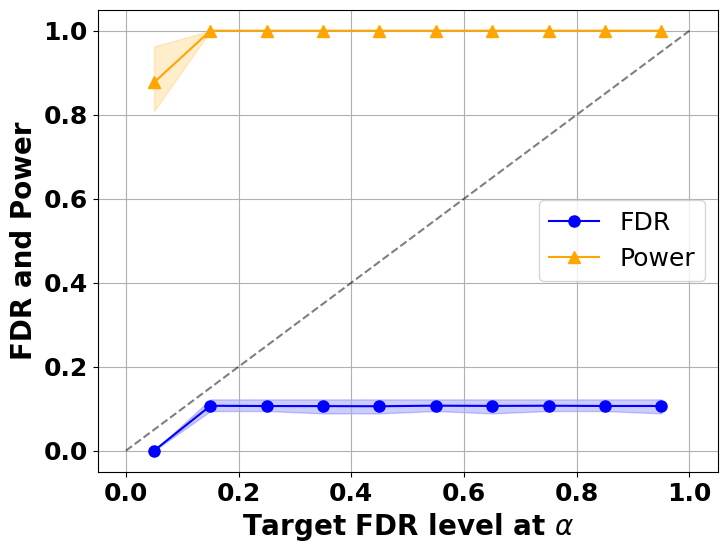}
        \label{fig:4lc3}
    }
        \subfigure[$c=0.4$]{
        \includegraphics[width=0.23\columnwidth]{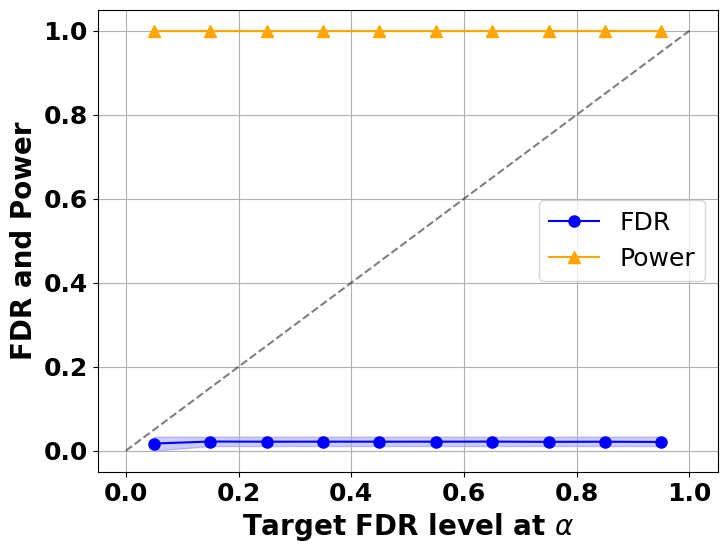}
        \label{fig:4lc4}
    }
    \caption{FDR and power curves across different target $\alpha$ level with Linear Regression on MIMIC-IV.}
    \label{fig:fdr_4l}
\end{figure*}

\subsection{Parameter Sensitivity}
\label{appendix: parameter}
In this section, we discuss the sensitivity of SAFER to on three key hyperparameters, the historical information sequence length $L$, the hidden dimensionality $d_h$ and the risk regularization coefficient $\gamma$ in the loss function. We report Macro-AUC and the reduction in counterfactual mortality rate, two core evaluation metrics that reflect recommendation accuracy and treatment effectiveness.

To ensure fair comparison and robustness, we perform a grid search over a predefined range of values for each parameter while holding others fixed. The selected values correspond to those that jointly optimize both performance metrics on the validation set.

\textbf{Historical sequence length $L$}: Figure~\ref{fig:seq_len} illustrates that SAFER's performance improves significantly as the sequence length increases initially, before stabilizing at a consistent level. This trend likely occurs because shorter sequences lack sufficient information for accurate predictions. To ensure a fair comparison across datasets, we set the sequence length to 8 for all experiments.

\begin{figure*}[!ht]
    \centering
    \subfigure[AUC]{
        \includegraphics[width=0.31\columnwidth]{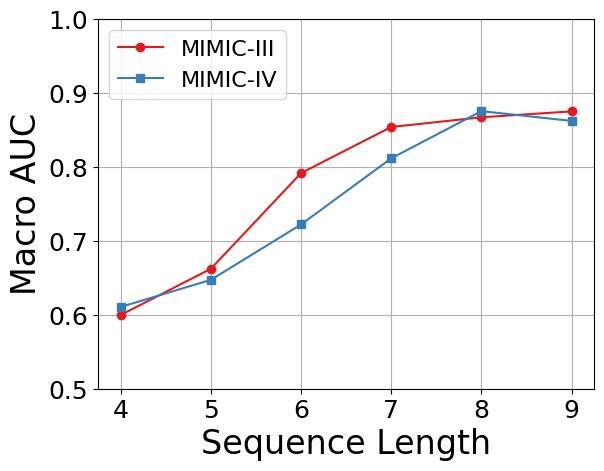}
    }
    \subfigure[$\downarrow$ Mortality Rate]{
        \includegraphics[width=0.31\columnwidth]{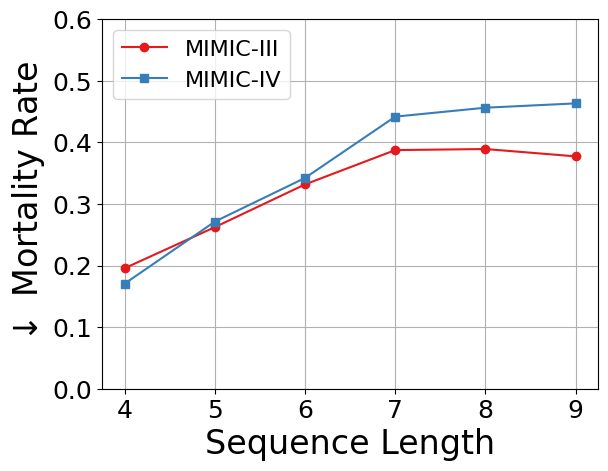}
    }
    \caption{The performance of SAFER under different historical information seqeunce length $L$}
    \label{fig:seq_len}
\end{figure*}

\begin{figure*}[!ht]
    \centering
    \subfigure[AUC]{
        \includegraphics[width=0.31\columnwidth]{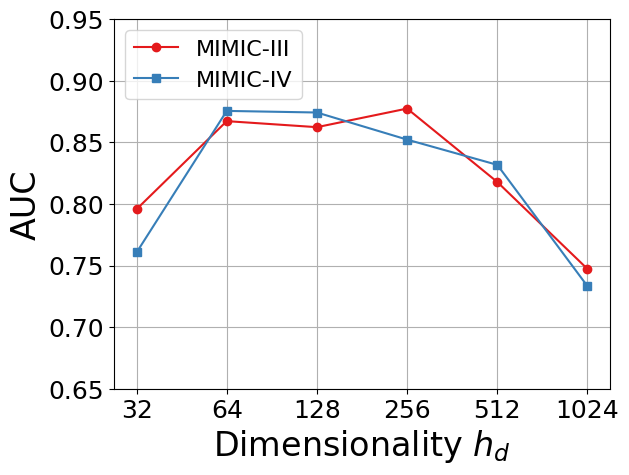}
    }
    \subfigure[$\downarrow$ Mortality Rate]{
        \includegraphics[width=0.31\columnwidth]{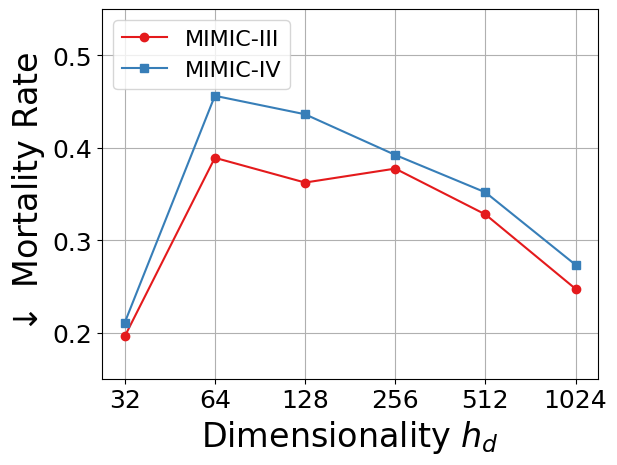}
    }
    \caption{The performance of SAFER under different hidden dimensionality $h_d$}
    \label{fig:d}
\end{figure*}

\begin{figure*}[!ht]
    \centering
    \subfigure[AUC]{
        \includegraphics[width=0.31\columnwidth]{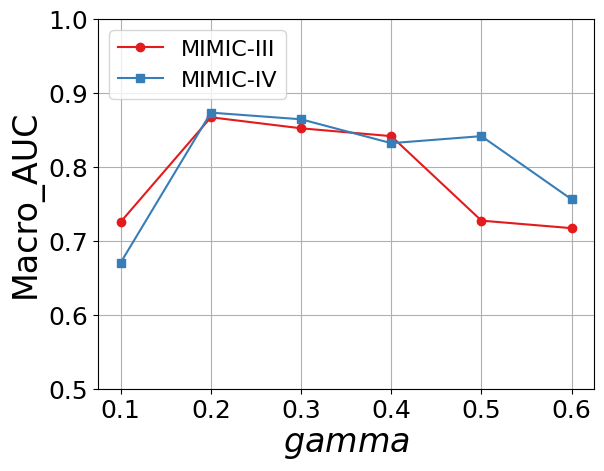}
    }
    \subfigure[$\downarrow$ Mortality Rate]{
        \includegraphics[width=0.31\columnwidth]{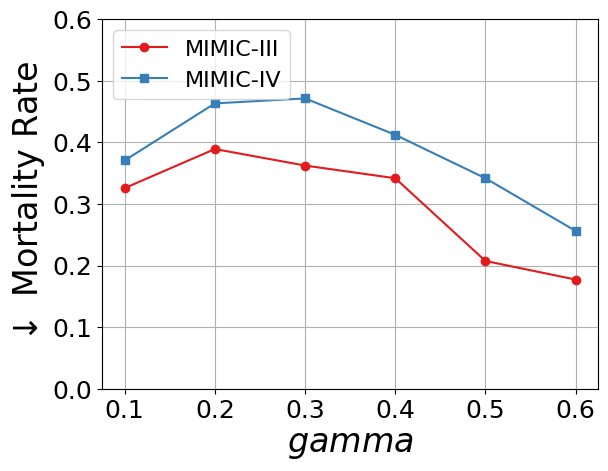}
    }
    \caption{The performance of SAFER under different $\gamma$ value in loss function}
    \label{fig:gamma}
\end{figure*}

\textbf{Hidden dimensionality $h_d$}: Figure~\ref{fig:d} illustrates the performance of SAFER across different hidden dimensionalities $h_d$, showing that both low and excessively high dimensions degrade model performance. A lower-dimensional representation leads to information loss, while a higher dimension increases model complexity, making proper dimensionality selection crucial. Since the model's performance remains stable for $h_d={128,256,512}$, we choose 128 to reduce model parameters and improve computational efficiency.

\textbf{$\gamma$ in the Loss Function}: Figure~\ref{fig:gamma} illustrates the performance of SAFER under different $\gamma$ values, guiding the selection of an optimal $\gamma$. A small $\gamma$ leads to decreased performance, confirming the necessity of incorporating this penalty term. However, an excessively large $\gamma$ is also detrimental, as it shifts the model’s focus away from the supervised signal during training, ultimately reducing overall performance.

These findings support the stability of SAFER under moderate hyperparameter variation, and highlight the importance of risk-aware fine-tuning in achieving consistent improvements in both predictive accuracy and patient safety.

\subsection{Case Study}
\begin{figure}[h]
    \centering
    \includegraphics[width=0.42\linewidth]{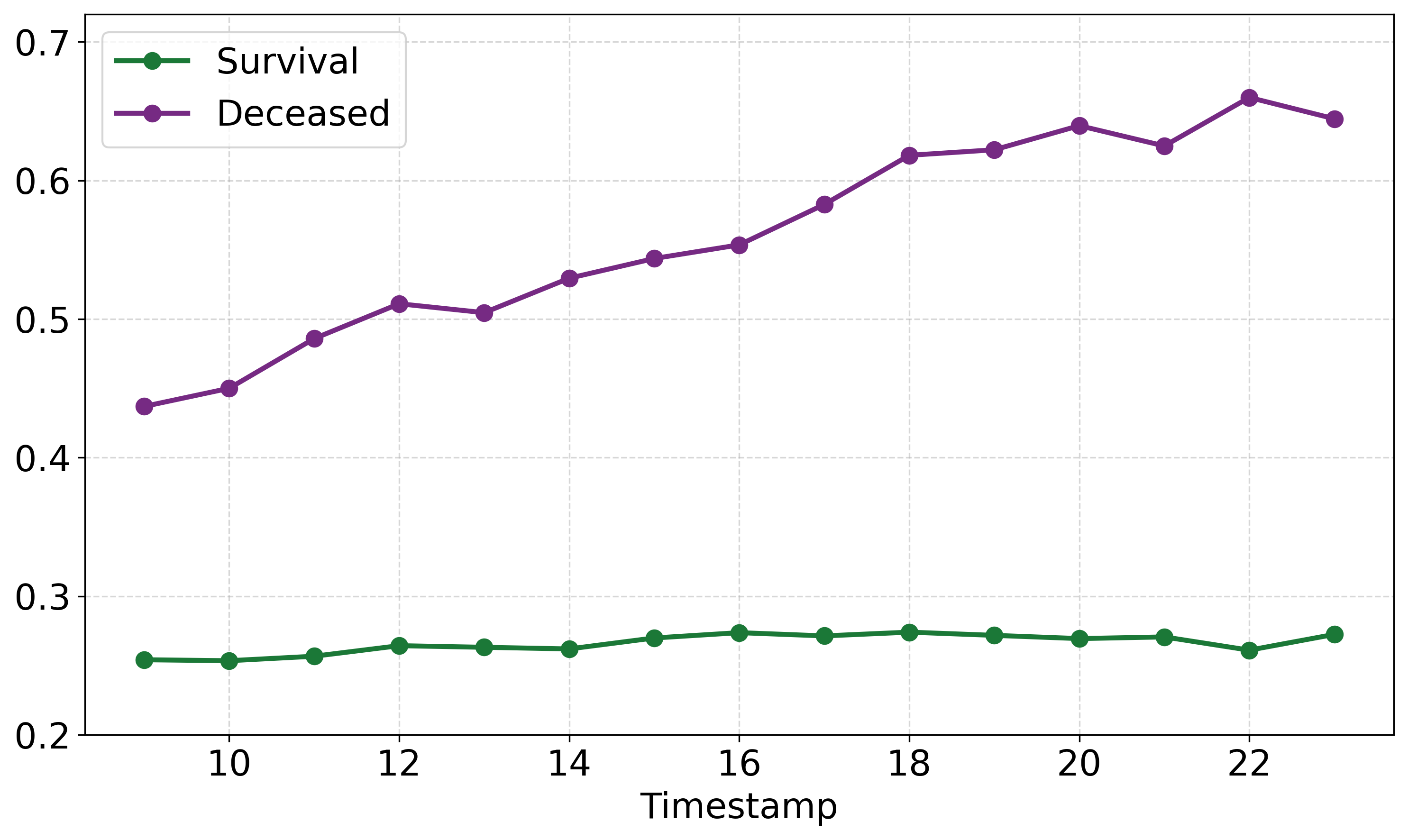}
    \caption{Trends in uncertainty scores over time.}
    \label{fig:progression}
\end{figure}

To further illustrate the interpretability of SAFER's uncertainty estimates, we select a subset of 10 surviving and 10 deceased patients with comparable sequence lengths. At each timestamp, we predict the subsequent treatment target using varying historical windows and compute the corresponding uncertainty scores. As shown in Figure~\ref{fig:progression}, the average uncertainty scores for surviving patients remain low and stable throughout the clinical timeline. In contrast, deceased patients exhibit a distinct upward trend in uncertainty as their condition deteriorates.

This divergence reflects a growing difference in predictive stability between the two cohorts. For surviving patients, the model consistently maintains high confidence, likely due to regular disease progression and coherent treatment-response patterns. Conversely, the increasing uncertainty observed among deceased patients suggests a transition into more complex or irregular clinical dynamics, where prediction becomes inherently more difficult.

The elevated uncertainty in deceased trajectories may arise from two primary sources: (1) ambiguous treatment behaviors driven by rapid physiological decline or emergent interventions; and (2) limited representational coverage of similar deteriorating cases in the training distribution, resulting in increased epistemic uncertainty. These observations align with our core modeling assumption that treatment labels for deceased patients are more likely to be noisy or unreliable due to outcome ambiguity and clinical variability.

Notably, these temporal uncertainty trends provide a form of counterfactual interpretability. By capturing the divergence in predictive confidence over time, SAFER not only differentiates between stable and high-risk trajectories but also offers a potential mechanism for proactive clinical risk detection. In practice, this trajectory-level uncertainty signal could be leveraged to identify patients transitioning into unfamiliar or high-risk states, thereby enabling timely human intervention.

In summary, this analysis underscores the utility of SAFER’s uncertainty estimates as both a diagnostic and interpretive tool, particularly in high-stakes clinical environments where model trustworthiness and actionable risk awareness are essential.


\end{document}